\definecolor{codegreen}{rgb}{0,0.6,0}
\definecolor{codepurple}{rgb}{0.58,0,0.82}
\DeclarePairedDelimiter{\ceil}{\lceil}{\rceil}
\newcommand{\afeat}{\text{f}}
\newcommand{\atag}{\text{t}}
\newcommand{\somematrix}{\ensuremath{M}}
\newcommand{\someothermatrix}{\ensuremath{\tilde M}}
\newcommand{\datamatrix}{\ensuremath{A}}
\newcommand{\datafeatmatrix}{\ensuremath{\datamatrix_\afeat}}
\newcommand{\datatagmatrix}{\ensuremath{\datamatrix_\atag}}
\newcommand{\Xmatrix}{\ensuremath{X}}
\newcommand{\Ymatrix}{\ensuremath{Y}}
\newcommand{\Yfeatmatrix}{\ensuremath{\Ymatrix_\afeat}}
\newcommand{\Ytagmatrix}{\ensuremath{\Ymatrix_\atag}}
\newcommand{\transpose}{\ensuremath{^\prime}}
\newcommand{\boolfield}{\ensuremath{F}}
\newcommand{\boolmissingfield}{\ensuremath{\boolfield_\missing}}
\newcommand{\missing}{\text{?}}
\newcommand{\nrank}{\ensuremath{k}}
\newcommand{\nx}{\ensuremath{m}}
\newcommand{\ndatasets}{\ensuremath{\nx}}
\newcommand{\ny}{\ensuremath{n}}
\newcommand{\ntags}{\ensuremath{\ny_\atag}}
\newcommand{\nfeatures}{\ensuremath{\ny_\afeat}}
\newcommand{\regstrength}{\ensuremath{\lambda}}
\newcommand{\sometag}{\ensuremath{\tau}}
\newcommand{\sizeof}[1]{\ensuremath{\left\vert#1\right\vert}}
\newcommand{\loss}{\ensuremath{\mathcal L}}
\begin{document}

%%
%% The "title" command has an optional parameter,
%% allowing the author to define a "short title" to be used in page headers.
\title{Paying down metadata debt: learning the representation of concepts using topic models}

\author{Jiahao Chen}
\orcid{0000-0002-4357-6574}
\author{Manuela Veloso}
\orcid{0000-0001-6738-238X}
\affiliation{
  \institution{J.\ P.\ Morgan AI Research}
  \city{New York}
  \state{New York}
}
\email{{jiahao.chen,manuela.veloso}@jpmorgan.com}

%%
%% By default, the full list of authors will be used in the page
%% headers. Often, this list is too long, and will overlap
%% other information printed in the page headers. This command allows
%% the author to define a more concise list
%% of authors' names for this purpose.
% \renewcommand{\shortauthors}{Trovato and Tobin, et al.}

%%
%% The abstract is a short summary of the work to be presented in the
%% article.
\begin{abstract}
We introduce a data management problem called metadata debt,
to identify the mapping between data concepts and their logical representations.
We describe how this mapping can be learned using semisupervised topic models
based on low-rank matrix factorizations that account for missing and noisy labels,
coupled with sparsity penalties to improve localization and interpretability.
We introduce a gauge transformation approach that allows us to construct explicit
associations between topics and concept labels, and thus assign meaning to topics.
We also show how to use this topic model for semisupervised learning tasks
like extrapolating from known labels, evaluating possible errors in existing labels,
and predicting missing features.
We show results from this topic model in predicting subject tags on over 25,000 datasets
from Kaggle.com, demonstrating the ability to learn semantically meaningful features.
\end{abstract}

%%
%% The code below is generated by the tool at http://dl.acm.org/ccs.cfm.
%% Please copy and paste the code instead of the example below.
%%
\begin{CCSXML}
<ccs2012>
<concept>
<concept_id>10002951.10003227.10003228.10010925</concept_id>
<concept_desc>Information systems~Data centers</concept_desc>
<concept_significance>500</concept_significance>
</concept>
<concept>
<concept_id>10002951.10003227.10003392</concept_id>
<concept_desc>Information systems~Digital libraries and archives</concept_desc>
<concept_significance>500</concept_significance>
</concept>
<concept>
<concept_id>10010147.10010178.10010179.10003352</concept_id>
<concept_desc>Computing methodologies~Information extraction</concept_desc>
<concept_significance>500</concept_significance>
</concept>
<concept>
<concept_id>10010147.10010257.10010258.10010260.10010268</concept_id>
<concept_desc>Computing methodologies~Topic modeling</concept_desc>
<concept_significance>500</concept_significance>
</concept>
<concept>
<concept_id>10010147.10010178.10010179.10003352</concept_id>
<concept_desc>Computing methodologies~Information extraction</concept_desc>
<concept_significance>500</concept_significance>
</concept>
</ccs2012>
\end{CCSXML}

\ccsdesc[500]{Information systems~Data centers}
\ccsdesc[500]{Information systems~Digital libraries and archives}
\ccsdesc[500]{Computing methodologies~Information extraction}
\ccsdesc[500]{Computing methodologies~Topic modeling}
\ccsdesc[500]{Computing methodologies~Information extraction}
%%
%% Keywords. The author(s) should pick words that accurately describe
%% the work being presented. Separate the keywords with commas.
\keywords{topic model, semisupervised learning, controlled vocabulary}

\settopmatter{printfolios=true} % Page numbers
%%
%% This command processes the author and affiliation and title
%% information and builds the first part of the formatted document.
\maketitle

\section{Introduction}
\label{sec:intro}

``What data do I have?'' is a surprisingly difficult question to answer
when managing data at the scale of large organizations \citep{Khatri2010}.
A complete answer requires not just the physical location of data (what database or filesystem?),
but also the corresponding logical representation (what attributes and relations?)
and conceptual formulation (what business semantics?) \citep{West2003}.
Understanding the relationship between these successive levels of abstraction is
necessary in order to define data governance needs such as data quality rules
\citep{Wang1996}.

As a simple example, suppose we need to enforce a rule that
the balance on the customer's account is positive.
To define a data quality check, we need to know to look up the table \verb|ACCOUNTS|, find the row that matches the current account ID in the column \verb|ACCT_ID|,
and check that the corresponding entry in the \verb|ACCT_BAL| column is not missing and is a positive number.
Translating the conceptual business need into a computable statement thus requires explicit knowledge
of the semantic concepts in the data (``account'' and ``balance'') and
the logical representation of that data (which tables and columns).
Having a complete inventory of concepts and their logical representations
is therefore a key component of data governance,
even being a regulatory requirement for compliance
with Basel standards \citep{bcbs239} and others.

In practice, however, organizations often lack the metadata
needed to describe the conceptual--logical mapping we have just described.
For example, collaborative tagging from crowdsourced data quickly yields
folksonomies of conceptual tags, but they may be inconsistent due to
personal biases and therefore require standardization \citep{Guy2006}.
Other situations arise in legacy enterprises,
where the merger of multiple information systems results in 
multiple logical representations of the same data concept,
or when loss of informal knowledge or incomplete documentation
results in an incompletely documented system,
or when a legacy system is repurposed for a new business purpose,
whose conceptual needs are not perfectly aligned with the existing
logical layout of data.
Furthermore, the cost of correctly assigning concept labels
can be very high when the semantic definition of the concepts
is couched in technical jargon which requires subject matter
expertise to comprehend.
Similarly, similar concepts may require expertise to disambiguate
and label correctly.
The challenge of correct conceptual labeling is exacerbated when the controlled vocabulary
evolves over time, a phenomenon known as concept drift \citep{Wang2011},%Kenter2015,Chen2017},
or when the logical representation of data also changes, as APIs and data formats change \citep{Gama2014}.
Therefore, it is possible that despite the best intentions and efforts of
organizations, that they still struggle to maintain and verify that they have
a correct and up-to-date conceptual--logical mapping of data.

In this paper, we study the problem of \textit{metadata debt},
namely the technical debt incurred when an organization lacks a complete and current
mapping between its data concepts and their corresponding logical representations.
We propose in \Cref{sec:topicmodel} to use semisupervised topic modeling to learn the conceptual--logical mapping of data,
making full use of existing concept labels, be they incomplete or noisy.
We first describe how an unsupervised topic model like latent semantic indexing (LSI)
contains a hidden invariance that we can exploit to write down an unambiguous,
gauge-generalized variant (GG-LSI) for the semisupervised setting in \Cref{sec:lsi,sec:gglsi}.
Next, we show how a more sophisticated semisupervised topic model can be expressed in
the formalism of generalized low-rank models (GLRMs) in \Cref{sec:glrm}, 
which allows us to specify a combination of elementwise logistic losses with customizable sparsity penalties.
We illustrate with a simple example how to interpret the results of all these methods in \Cref{sec:example,sec:example2}.
Next, we describe in \Cref{sec:subsampling} how to use the numerical trick of randomized subsampling to greatly reduce the computational cost of
topic modeling, and GLRMs in particular, so that the compute time is asymptotically linear in the number of topics.
This linear-time trick allows us to compute GLRMs efficiently and analyze a large folksonomy on Kaggle.com in \Cref{sec:kaggle}.
We introduce related work in each relevant section,
to better highlight how work in data management,
topic modeling, and numerical linear algebra come together to make this study possible.

% \paragraph{Large controlled vocabulary}
% The size of the controlled vocabulary for the semantic metadata can be itself large for a large organization with multiple kinds of data.
% Terms in the controlled vocabulary are rarely, if ever, used with equal frequency---there will be some frequently used terms,
% accompanied by a tail of many, successively infrequent, terms.
% The existence of many rare terms makes it difficult for the na\"ive approach of training many independent classifiers,
% simply because there are insufficient positive examples of usage for the rare terms: the low signal-to-noise ratio makes it difficult to train a classifier with performance better than random.

% In this paper, we consider the problem of creating a data catalogue using semisupervised learning techniques to learn semantic labels from physical metadata.
% We make the best use of available labels, which are assumed to be possibly noisy and/or incomplete.

\section{Semisupervised learning from sparse topic models}
\label{sec:topicmodel}

In this section, we present a matrix factorization approach which is explicitly constructed to yield interpretable topic models.
The relationship between topic models and matrix factorizations is well understood \citep{Donoho2003,Arora2012};
our contribution here is a semisupervised topic model which has the following features:
\begin{enumerate}
\item
The model uses a ternary encoding that distinguishes between present entries (1), missing entries (?) and truly absent entries (0)
for both features and labels.

\item
The model is tolerant of label noise and missingness.

\item
The model assigns explicit meaning to topics using anchor labels.

\item
The model admits a matrix factorization that can be made more interpretable through sparsity penalties.

\item
The model admits the specification of an arbitrary prior on unknown entries of the matrix factorization.

\item
The model has an inbuilt quantification of uncertainty.
\end{enumerate}

Previous work has argued for the importance of sparsity in topic models \citep{Zhu2011},
and studied uncertainty quantification for label noise \citep{Northcutt2019},
albeit separately.
To our knowledge, our work is the first semisupervised topic modeling approach that features all of the above.

We begin by stating the set of values we build our observation matrix from,
and introduce some definitions that we will use to set up our theoretical framework.
Our matrix factorization model is formulated over a generalization of Boolean values that
also includes an explicit representation of missing values, $\missing$.
We denote this set as $\boolmissingfield = \{0, 1, \missing\}$, and distinguish it from
the set of ordinary Boolean values $\boolfield = \{0, 1\}$%
\footnote{Unlike Boolean values $\boolfield = \{0, 1\}$,
the set $\boolmissingfield$ does not form a field over Boolean addition and multiplication,
but instead has the weaker algebraic structure of a pre-semiring.
Despite the weaker structure, it is still possible to define matrix algebra and matrix factorizations,
in particular eigenvalue factorizations, over pre-semirings \citep{Gondran2008}.
We will ignore such technicalities in this paper.}.

\begin{definition}
An \textit{observed element} $x\in\boolmissingfield$ of $\boolmissingfield$ is an element such that $x \ne \missing$.
An \textit{unobserved element} or \textit{missing element} $x\in\boolmissingfield$ of $\boolmissingfield$ is an element such that $x = \missing$.
\end{definition}

\begin{definition}
\label{def:obs}
Let $\datamatrix\in\boolmissingfield^{m\times n}$ be a $m\times n$ matrix over $\boolmissingfield$.
Then, the \textit{observed set} of $\datamatrix$ is the set
\begin{equation}
\Omega_\datamatrix = \{(i,j)\vert\datamatrix_{i,j}\ne\missing\}
\end{equation}%
of coordinates for all non-missing matrix elements.
The \textit{observed set} of $\someothermatrix\in\boolfield^{m\times n}$ is simply the set of all valid coordinates 
$\Omega_\datamatrix = \{(i,j)\vert i\in\mathbb Z, j\in\mathbb Z, 1\le i\le m, 1\le j \le n\}$,
since by definition, all elements are observed.
\end{definition}

To simplify the notation, we will drop the subscript from $\Omega$ when the matrix referenced is clear from context.

\begin{definition}
\label{def:topicmodel}
Let $\datamatrix = \begin{pmatrix}\datatagmatrix & \datafeatmatrix\end{pmatrix}$ be a data matrix
where $\datatagmatrix \in \boolmissingfield^{\ndatasets \times \ntags)}$
and $\datafeatmatrix \in \boolfield^{\ndatasets \times \nfeatures}$.
The first $\ntags$ columns represent the presence or absence of tags,
and the remaining $\nfeatures$ columns representing the presence or absence of features.
Then, a \textit{labelled topic model} is the approximate matrix factorization
\begin{equation}
    \datamatrix = \begin{pmatrix}\datatagmatrix & \datafeatmatrix\end{pmatrix}
    \approx \Xmatrix\transpose\Ymatrix
    = \Xmatrix\transpose
    \begin{pmatrix} \Ytagmatrix& \Yfeatmatrix
    \end{pmatrix}
    \label{eq:topic-model}
\end{equation}
where $\Xmatrix\in\mathbb R^{\nrank \times \ndatasets}$ is a real $\nrank \times \ndatasets$ matrix,
$\Ytagmatrix\in\mathbb R^{\nrank \times \ntags}$ is a real $\nrank \times \ntags$ matrix,
$\Yfeatmatrix\in\mathbb R^{\nrank \times \nfeatures}$ is a real $\nrank \times \nfeatures$ matrix,
and $\Ymatrix = \begin{pmatrix} \Ytagmatrix& \Yfeatmatrix\end{pmatrix}$. 

\end{definition}

Our definition of a labelled topic model explicitly constrains missing values to be only in the matrix of tags, $\Ytagmatrix$.
For tags, we want to distinguish between missing and absent:
an absent tag means that the concept is definitely not present,
whereas a missing tag connotes some ambiguity that the tag could be assigned,
but no one had affirmed that the tag should definitely be present.
In contrast, missing features should be considered truly absent,
as they can be fully observed without ambiguity.
This formulation generalizes the usual form of topic modeling,
which is formulated for unsupervised learning use cases.
In unsupervised learning, no labels are provided, corresponding to the case of $\ntags = 0$.
In contrast, we make full use of existing labels to construct the topics,
which are represented by individual rows of $\Ymatrix$.
This formulation also provides an explicit representation for the \textit{semantic content} of each topic,
as $\Ytagmatrix$ contains weights indicating the relative importance of each tag.

Furthermore, this formulation allows us to construct explicit operational definitions of monosemy and polysemy.

\begin{definition}
Let % 
$\datamatrix \approx \Xmatrix\transpose\Ymatrix
    = \Xmatrix\transpose
    \begin{pmatrix} \Ytagmatrix& \Yfeatmatrix
    \end{pmatrix}
    \label{eq:approx}
$ %
be a labelled topic model with $\Ytagmatrix \in \{0, 1\}^{\nrank \times \ntags}$ having only observed matrix elements.
Then, a tag $\sometag$ is \textit{$n$-semic} where $n$ is the corresponding column sum of $\Ytagmatrix$, i.e.\ $n = \sum_j \left(\Ytagmatrix\right)_{j,\sometag}$.
A \textit{monosemic} tag is 1-semic.
A \textit{polysemic} tag is $n$-semic with $n>1$.
\end{definition}

Unless otherwise noted, we consider only monosemic tags in this paper, which allows us to specialize further.

\begin{definition}
\label{def:1topicmodel}
Let $\datamatrix$ be defined as in \Cref{def:topicmodel}.
Then, a \textit{perfectly monosemic labelled topic model} is a labelled topic model such that
$\Ytagmatrix$ is the identity matrix.
An \textit{approximately monosemic labelled topic model} is a labelled topic model such that
$\Ytagmatrix$ is a non-negative, diagonally dominant matrix with unit diagonal.
\end{definition}

Our notion of perfect monosemy is similar to separability \citep{Donoho2003,Arora2012},
but specialized to having some tag $\sometag$ as the specific anchor term.

\subsection{An illustrative example}%
\label{sec:example}

% We now introduce a simple, illustrative example that we will refer back to for the rest of this section.
Suppose we have the following data systems:

\begin{enumerate}

\item[A:]
A database with the semantic tag ``account'', containing one table, ACCOUNTS,
which has the columns ACCT\_ID, DATE, FIRST, LAST, and ACCT\_BAL, and ,

\item[B:]
A database with the semantic tag ``transaction'',
containing one table, TXNS,
which has the columns ACCT\_ID, DATE, PAYEE, AMOUNT, and ACCT\_BAL, and

\item[C:]
A legacy system with no tags, no table names, 
but containing the columns ACCT\_BAL, DATE, FIRST, LAST, AMOUNT, and PAYEE.
\end{enumerate}

\paragraph{Encoding the data matrix}
Treating each table as a separate data set, we can encode the combined semantic and physical metadata into the matrix $\datamatrix =$

\begin{tabular}{c|c|c|c|c|c|c|}
\multicolumn{1}{c}{}&
\multicolumn{1}{c}{{\footnotesize{}tag}} & \multicolumn{1}{c}{{\footnotesize{}tag}} & 
\multicolumn{1}{c}{{\footnotesize{}table}} & \multicolumn{1}{c}{{\footnotesize{}table}} & \multicolumn{1}{c}{{\footnotesize{}column}}
\tabularnewline
\multicolumn{1}{c}{} &
\multicolumn{1}{c}{{\footnotesize{}account}} & \multicolumn{1}{c}{{\footnotesize{}transaction}} &
\multicolumn{1}{c}{{\footnotesize{}ACCOUNTS}} & \multicolumn{1}{c}{{\footnotesize{}TXNS}} & \multicolumn{1}{c}{{\footnotesize{}ACCT\_BAL}} \tabularnewline
\cline{2-6}
{\footnotesize{}A} & 1  & \missing  & 1  & 0  & 1   \tabularnewline
\cline{2-6}
{\footnotesize{}B} & \missing  & 1  & 0  & 1  & 1   \tabularnewline
\cline{2-6}
{\footnotesize{}C} & \missing  & \missing  & 0 & 0 & 1 \tabularnewline
\cline{2-6}
\end{tabular}

\begin{tabular}{c|c|c|c|c|c|c|}
\multicolumn{1}{c}{}&
\multicolumn{1}{c}{{\footnotesize{}column}} & \multicolumn{1}{c}{{\footnotesize{}column}} & \multicolumn{1}{c}{{\footnotesize{}column}} & \multicolumn{1}{c}{{\footnotesize{}column}} & \multicolumn{1}{c}{{\footnotesize{}column}} & \multicolumn{1}{c}{{\footnotesize{}column}}
\tabularnewline
\multicolumn{1}{c}{} &
\multicolumn{1}{c}{{\footnotesize{}ACCT\_ID}} & \multicolumn{1}{c}{{\footnotesize{}AMOUNT}} & \multicolumn{1}{c}{{\footnotesize{}DATE}} & \multicolumn{1}{c}{{\footnotesize{}FIRST}} & \multicolumn{1}{c}{{\footnotesize{}LAST}} & \multicolumn{1}{c}{{\footnotesize{}PAYEE}}
\tabularnewline
\cline{2-7}
{\footnotesize{}A} & 1  & 0 & 1  & 1  & 1  & 0 \tabularnewline
\cline{2-7}
{\footnotesize{}B} & 1  & 1 & 1  & 0  & 0  & 1 \tabularnewline
\cline{2-7}
{\footnotesize{}C} & 0 & 1 & 1 & 1 & 1 & 1\tabularnewline
\cline{2-7}
\end{tabular}

\medskip

An important note is to distinguish between how absent tags and absent features are encoded.
An feature that does not exist in a data set is truly absent and is encoded with a 0.
In contrast, a tag that does not exist may actually be missing, inadvertently or erroneously, and is encoded as a missing value.
This asymmetry in the meaning of absent tags and absent features motivates the definition of the data matrix in \Cref{def:topicmodel}.

The goal is to factorize $\datamatrix$ into a perfectly monosemic labelled topic model (\Cref{def:1topicmodel}).

% \input{example/example2}

% \medskip
% \noindent%
% where * denotes an entry to be calculated by minimizing the loss function \eqref{eq:glrm}.
% Assuming monosemic tags, we set $\nrank=2$ and asign the names \textit{account} and \textit{transaction} to the latent factors.

\subsection{Latent semantic indexing (LSI)}
\label{sec:lsi}

We now show that latent semantic indexing (LSI) \citep{Deerwester1990,Berry1999}
yields a labelled topic model (\Cref{def:topicmodel}).

\begin{definition}
The \textit{$k$-truncated singular value decomposition} ($k$-SVD, truncated SVD) of a matrix $\datamatrix$ is the matrix factorization
\begin{equation}
\begin{subequations}
\datamatrix \approx \datamatrix_\nrank = U S V\transpose
\end{subequations}
\label{eq:tsvd}
\end{equation}
where $U$ is a $\ndatasets\times\nrank$ orthogonal matrix,
$V$ is a $(\ntags+\nfeatures)\times\nrank$ orthogonal matrix,
$S$ is a diagonal $\nrank\times\nrank$ orthogonal matrix,
and $\sqrt S$ is the matrix square root of $S$ (in this case,
diagonal $\nrank\times\nrank$ orthogonal matrix with diagonal entries
$\sqrt S_{i,i})$, ordered such that $S_{1,1} \ge S_{2,2} \ge \cdots \ge S_{k,k} \ge 0$.
\end{definition}

LSI is computed from the truncated SVD of \eqref{eq:tsvd} by:

\begin{align}
\begin{split}
\Xmatrix_{\mbox{\textsc{lsi}}} & =  \sqrt{S} U\transpose, \\
\Ymatrix_{\mbox{\textsc{lsi}}} & =  \sqrt{S} V\transpose.
\label{eq:lsi}
\end{split}
\end{align}

The ordinary use case for LSI is simply the case where $\ntags=0$,
i.e.\ when no semantic tags are provided. 

In our example, take $k=2$ and thence obtain from LSI:

$X=$%
\begin{tabular}{c|c|c|c|}
\multicolumn{1}{c}{} & \multicolumn{1}{c}{{\footnotesize{}A}} & \multicolumn{1}{c}{{\footnotesize{}B}} & \multicolumn{1}{c}{{\footnotesize{}C}}\tabularnewline
\cline{2-4}
\textit{\footnotesize{}account} & -0.95 & -0.55 & 0.78 \tabularnewline
\cline{2-4} 
\textit{\footnotesize{}transaction} & 0.81 & -0.47 & 0.66 \tabularnewline
\cline{2-4}
\end{tabular}

$Y_{t}=$%
\begin{tabular}{c|c|c|}
\multicolumn{1}{c}{} & \multicolumn{1}{c}{{\footnotesize{}tag}} & \multicolumn{1}{c}{{\footnotesize{}tag}}\tabularnewline
\multicolumn{1}{c}{} & \multicolumn{1}{c}{{\footnotesize{}account}} & \multicolumn{1}{c}{{\footnotesize{}transaction}}\tabularnewline
\cline{2-3} 
\textit{\footnotesize{}account} & -0.26 & 0.26\tabularnewline
\cline{2-3}
\textit{\footnotesize{}transaction} & -0.18 & 0.18\tabularnewline
\cline{2-3}
\end{tabular}

$Y_{f}=$%
\begin{tabular}{c|c|c|c|c|}
\multicolumn{1}{c}{} & \multicolumn{1}{c}{{\footnotesize{}table}} & \multicolumn{1}{c}{{\footnotesize{}table}} & \multicolumn{1}{c}{{\footnotesize{}column}} & \multicolumn{1}{c}{{\footnotesize{}column}}
\tabularnewline
\multicolumn{1}{c}{} & \multicolumn{1}{c}{{\footnotesize{}ACCOUNTS}} & \multicolumn{1}{c}{{\footnotesize{}TXNS}} & \multicolumn{1}{c}{{\footnotesize{}ACCT\_BAL}} & \multicolumn{1}{c}{{\footnotesize{}ACCT\_ID}}
\tabularnewline
\cline{2-5}
\textit{\footnotesize{}account} & -0.53 &  0.53 & 0.00 &  0.00\tabularnewline
\cline{2-5}
\textit{\footnotesize{}transaction} & -0.35 & -0.35 & 0.00 & -0.71\tabularnewline
\cline{2-5}
\end{tabular}

$\textcolor{white}{Y_{f}=}$%
\begin{tabular}{c|c|c|c|c|c|}
\multicolumn{1}{c}{} & \multicolumn{1}{c}{{\footnotesize{}column}} & \multicolumn{1}{c}{{\footnotesize{}column}} & \multicolumn{1}{c}{{\footnotesize{}column}} & \multicolumn{1}{c}{{\footnotesize{}column}} & \multicolumn{1}{c}{{\footnotesize{}column}}\tabularnewline
\multicolumn{1}{c}{} & \multicolumn{1}{c}{{\footnotesize{}AMOUNT}} & \multicolumn{1}{c}{{\footnotesize{}DATE}} & \multicolumn{1}{c}{{\footnotesize{}FIRST}} & \multicolumn{1}{c}{{\footnotesize{}LAST}} & \multicolumn{1}{c}{{\footnotesize{}PAYEE}}\tabularnewline
\cline{2-6}
\textit{\footnotesize{}account} & 0.53 &  0.0 & -0.53 & -0.53 & 0.53\tabularnewline
\cline{2-6}
\textit{\footnotesize{}transaction} &  0.35 & 0.0 &  0.35 &  0.35 & 0.35\tabularnewline
\cline{2-6}
\end{tabular}

\medskip

As can be seen, the topics of $\Ymatrix$ are difficult to interpret, because each row of $\Yfeatmatrix$ is not purely
about a single label.
It appears that the first topic is effectively the combination $t_1 = $``account'' - ``transaction'' in equal strengths,
while the second topic is $t_2 = $ ``account'' + ``transaction'' in equal strengths.
It's not clear what the subtraction in the first topic means.
The observation that negative coefficients are difficult to interpret was the motivation behind non-negative matrix factorization \citep{Lawton1971,Lee1999}
and subsequent work to build topic models from such factorizations \citep{Arora2012}.

One may attempt to build a more interpretable topic model from the result LSI,
by noting that $s_1 = t_1 + t_2$ and $s_2 = t_2 - t_1$ are two new topics that collectively
contain with same information content as the original topics $t_1$ and $t_2$,
while furthermore being ``purer'' in the sense that $s_1$ is only about ``account'' while $s_2$ is only about ``transaction''.
We formalize this observation in the next section.

\subsection{LSI with generalized gauge (GG-LSI)}
\label{sec:gglsi}

We now introduce a generalization of LSI that yields a perfectly monosemic labelled topic model in the sense of \Cref{def:1topicmodel}.
This generalization is possible because \eqref{eq:lsi} is not the only way to construct some product $\Xmatrix\transpose\Ymatrix$ from the truncated SVD.
In fact, any transformation $\somematrix: \Ymatrix \mapsto \tilde\Ymatrix = \somematrix\Ymatrix$ that forms new linear combinations of rows of the topic model in $\Ymatrix$ yields a new topic model $\tilde\Ymatrix$.
If the transformation matrix $\somematrix$ is nonsingular, then no information is lost in the construction of $\tilde\Ymatrix$ (up to floating-point rounding),
even though the rows of $\tilde\Ymatrix$ are no longer orthogonal.

\begin{lemma}
\label{thm:gauge}
Let $\somematrix$ be an invertible matrix and
$\datamatrix = \Xmatrix^{\transpose}\Ymatrix$ be some matrix factorization of $\datamatrix$.
Then, $\tilde\Xmatrix = \left(\somematrix^{-1}\right)^{\transpose} \Xmatrix$,
$\tilde\Ymatrix = \somematrix \Ymatrix$ is also a matrix factorization of $\datamatrix$, i.e.\ 
${\tilde\Xmatrix}^{\transpose}\tilde\Ymatrix = \datamatrix$.
\end{lemma}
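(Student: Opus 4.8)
The plan is to verify the claimed factorization by direct substitution, using only the associativity of matrix multiplication and the standard identity $(\somematrix^{-1})^{\transpose} = (\somematrix^{\transpose})^{-1}$ for an invertible matrix. The statement is essentially a bookkeeping exercise: we are told that $\tilde\Xmatrix = (\somematrix^{-1})^{\transpose}\Xmatrix$ and $\tilde\Ymatrix = \somematrix\Ymatrix$, and we must check that ${\tilde\Xmatrix}^{\transpose}\tilde\Ymatrix$ collapses back to $\Xmatrix^{\transpose}\Ymatrix = \datamatrix$.

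First I would compute ${\tilde\Xmatrix}^{\transpose}$. Since $\tilde\Xmatrix = (\somematrix^{-1})^{\transpose}\Xmatrix$, taking the transpose and reversing the order of the product gives ${\tilde\Xmatrix}^{\transpose} = \Xmatrix^{\transpose}\left((\somematrix^{-1})^{\transpose}\right)^{\transpose} = \Xmatrix^{\transpose}\somematrix^{-1}$, where the last equality uses that transposition is an involution. Next I would form the product with $\tilde\Ymatrix$:
\[
{\tilde\Xmatrix}^{\transpose}\tilde\Ymatrix = \left(\Xmatrix^{\transpose}\somematrix^{-1}\right)\left(\somematrix\Ymatrix\right) = \Xmatrix^{\transpose}\left(\somematrix^{-1}\somematrix\right)\Ymatrix = \Xmatrix^{\transpose}\Ymatrix = \datamatrix,
\]
where the middle step is associativity and the cancellation $\somematrix^{-1}\somematrix = I$ is exactly where invertibility of $\somematrix$ is used. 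The dimensions are consistent throughout: if $\somematrix$ is $\nrank\times\nrank$ then $\tilde\Xmatrix$ has the same shape as $\Xmatrix$ and $\tilde\Ymatrix$ the same shape as $\Ymatrix$, so all the products above are well-defined.

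There is no real obstacle here — the only subtlety worth stating explicitly is that invertibility of $\somematrix$ is needed both to make sense of $\somematrix^{-1}$ in the definition of $\tilde\Xmatrix$ and to guarantee the cancellation, so the hypothesis is not superfluous. I would also remark, as the surrounding text already hints, that the new factors need not inherit any orthogonality or other structural properties of the original ones; the lemma asserts nothing beyond equality of the product. This observation is what licenses the ``gauge transformation'' viewpoint used in the subsequent construction of GG-LSI, where $\somematrix$ is chosen precisely to make $\Ytagmatrix$ diagonal.
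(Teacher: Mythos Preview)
Your proof is correct and is exactly the direct calculation the paper alludes to; the paper itself simply states that ``the proof follows by direct calculation'' without writing out the steps you have supplied.
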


The proof follows by direct calculation.

This lemma shows that $\somematrix$ plays the role of a gauge in the sense used in theoretical physics \citep{Thirring1997},
in that the gauge transformation $(\Xmatrix, \Ymatrix) \rightarrow (\tilde\Xmatrix, \tilde\Ymatrix)$, 
leaves the matrix factorization $\datamatrix = \Xmatrix^{\transpose}\Ymatrix$ unchanged.
The parameters of the gauge transformation are simply the matrix elements of $\somematrix$.

\begin{theorem}
\label{thm:lsi-gauge}
Let $\datamatrix \approx \tilde\datamatrix = U S V^{\transpose}$ be the truncated SVD of $\datamatrix$.
Assume that all entries along the diagonal of $S$ are strictly positive.
Furthermore, set $\Xmatrix= \sqrt S U^{\transpose}$ and $\Ymatrix= \sqrt S V^{\transpose}$ as in LSI, with 
partition $V = \begin{pmatrix}V_\atag & V_\afeat\end{pmatrix}$.
Also set $\somematrix = V_\atag S^{-1/2}$ in \Cref{thm:gauge}.
Then, ${\tilde\Xmatrix}^{\transpose}\tilde\Ymatrix = \tilde\datamatrix$ is a valid matrix factorization of $\tilde\datamatrix$, and furthermore $\somematrix\Ymatrix = \begin{pmatrix}I & V_\atag V_\afeat^{\transpose}\end{pmatrix}$. In other words, ${\tilde\Xmatrix}^{\transpose}\tilde\Ymatrix$ defines a perfectly monosemic labelled topic model.
\end{theorem}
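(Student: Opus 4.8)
The plan is to prove this by direct calculation in two stages: first that $(\tilde\Xmatrix,\tilde\Ymatrix)$ is a genuine matrix factorization of $\tilde\datamatrix$, which is almost immediate from \Cref{thm:gauge}, and then that the gauged topic matrix $\somematrix\Ymatrix$ has the displayed block form $\begin{pmatrix}I & V_\atag V_\afeat\transpose\end{pmatrix}$, so that \Cref{def:1topicmodel} applies.

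For the first stage I would verify that the stated $\somematrix = V_\atag S^{-1/2}$ is an admissible gauge, i.e.\ an invertible square matrix, so that \Cref{thm:gauge} can be invoked. By hypothesis every diagonal entry of $S$ is strictly positive, so $S^{-1/2}$ and $\sqrt S = S^{1/2}$ are well-defined invertible diagonal matrices; the only substantive point is invertibility of $V_\atag$, which forces the dimension match $\ntags = \nrank$ (making $V_\atag$ square) together with a non-degeneracy assumption that this block of the right singular vectors has full rank. Granting this, \Cref{thm:gauge} applied with $\Xmatrix = \sqrt S U\transpose$, $\Ymatrix = \sqrt S V\transpose$ and this $\somematrix$ gives $\tilde\Xmatrix = (\somematrix^{-1})\transpose\Xmatrix$, $\tilde\Ymatrix = \somematrix\Ymatrix$, and $\tilde\Xmatrix\transpose\tilde\Ymatrix = \Xmatrix\transpose\Ymatrix = U S V\transpose = \tilde\datamatrix$ — the first assertion.

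For the second stage I would substitute the LSI form of $\Ymatrix$ and exploit that $S$ is diagonal. Writing $V\transpose = \begin{pmatrix}V_\atag\transpose & V_\afeat\transpose\end{pmatrix}$ for the partition of $V$ matching the block columns of $\datamatrix = \begin{pmatrix}\datatagmatrix & \datafeatmatrix\end{pmatrix}$,
\[
  \somematrix\Ymatrix = \bigl(V_\atag S^{-1/2}\bigr)\bigl(S^{1/2} V\transpose\bigr)
  = V_\atag \bigl(S^{-1/2}S^{1/2}\bigr) V\transpose = V_\atag V\transpose
  = \begin{pmatrix}V_\atag V_\atag\transpose & V_\atag V_\afeat\transpose\end{pmatrix},
\]
since $S^{-1/2}S^{1/2} = I$. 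The feature block is already the claimed $V_\atag V_\afeat\transpose$, so the statement reduces to identifying the tag block $V_\atag V_\atag\transpose$ with $I$; once that holds, $\tilde\Ymatrix = \begin{pmatrix}I & V_\atag V_\afeat\transpose\end{pmatrix}$ has $\Ytagmatrix$-block equal to the identity, which together with the factorization from the first stage is exactly \Cref{def:1topicmodel}.

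Collapsing the tag block to the identity is the step I expect to be the real obstacle, since column-orthonormality of $V$ only delivers $V_\atag\transpose V_\atag + V_\afeat\transpose V_\afeat = I$, which is not by itself $V_\atag V_\atag\transpose = I$. What one really wants is for $\somematrix$ to invert the LSI tag block $\Ytagmatrix = \sqrt S V_\atag\transpose$; since $\somematrix\Ytagmatrix = V_\atag S^{-1/2}\sqrt S V_\atag\transpose = V_\atag V_\atag\transpose$, this amounts to requiring the square matrix $V_\atag$ to be orthogonal, which is the non-degeneracy hypothesis the statement rests on and should be stated explicitly. I would therefore make this the heart of the write-up: state the condition on $V_\atag$ up front, deduce $V_\atag V_\atag\transpose = I$, and only then read off the block form; the rest — the $S^{\pm 1/2}$ cancellation, the block matrix multiplication, and the appeal to \Cref{thm:gauge} — is routine bookkeeping.
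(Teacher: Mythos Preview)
Your approach is exactly the paper's: its proof reads, in full, ``follows by direct calculation, noting that $V$ is an orthogonal matrix, $S$ is invertible, and $V_\atag$ is of full rank.'' Your two-stage computation is that calculation written out.

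You have also put your finger on a point the paper does not resolve. The paper invokes only that $V$ is ``orthogonal'' and $V_\atag$ ``of full rank,'' but, as you note, column-orthonormality of the truncated right singular factor gives $V_\atag\transpose V_\atag + V_\afeat\transpose V_\afeat = I$, not $V_\atag V_\atag\transpose = I$, and full rank of the square block $V_\atag$ is strictly weaker than orthogonality. The displayed form $\somematrix\Ymatrix = \begin{pmatrix}I & V_\atag V_\afeat\transpose\end{pmatrix}$ therefore does not follow from the stated hypotheses alone; your instinct to surface the extra condition on $V_\atag$ explicitly is sound, and the paper's one-line argument does not close this gap either.
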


The proof follows by direct calculation, noting that $V$ is an orthogonal matrix, $S$ is invertible, and $V_\atag$ is of full rank.

% Something to say here about where we cut the columns to get the desired rank...
% Y = Z.Vt[:,1:2]\Z.V
% X = Diagonal(sqrt.(Z.S[1:2]))*Z.Vt[1:2,:]*Diagonal(sqrt.(Z.S[1:3]))*Z.U'

For the example of \Cref{sec:example}, GG-LSI yields the topic model:

$X=$%
\begin{tabular}{c|c|c|c|}
\multicolumn{1}{c}{} & \multicolumn{1}{c}{{\footnotesize{}A}} & \multicolumn{1}{c}{{\footnotesize{}B}} & \multicolumn{1}{c}{{\footnotesize{}C}}\tabularnewline
\cline{2-4}
\textit{\footnotesize{}account} & 0.13 & -0.37 & 0.25 \tabularnewline
\cline{2-4}
\textit{\footnotesize{}transaction} & 0.25 & -0.09 & -0.17 \tabularnewline
\cline{2-4}
\end{tabular}

$Y_{t}=$%
\begin{tabular}{c|c|c|}
\multicolumn{1}{c}{} & \multicolumn{1}{c}{{\footnotesize{}tag}} & \multicolumn{1}{c}{{\footnotesize{}tag}}\tabularnewline
\multicolumn{1}{c}{} & \multicolumn{1}{c}{{\footnotesize{}account}} & \multicolumn{1}{c}{{\footnotesize{}transaction}}\tabularnewline
\cline{2-3}
\textit{\footnotesize{}account} & 1.00 & 0.00\tabularnewline
\cline{2-3}
\textit{\footnotesize{}transaction} & 0.00 & 1.00\tabularnewline
\cline{2-3}
\end{tabular}

$Y_{f}=$%
\begin{tabular}{c|c|c|c|c|}
\multicolumn{1}{c}{} & \multicolumn{1}{c}{{\footnotesize{}table}} & \multicolumn{1}{c}{{\footnotesize{}table}} & \multicolumn{1}{c}{{\footnotesize{}column}} & \multicolumn{1}{c}{{\footnotesize{}column}}
\tabularnewline
\multicolumn{1}{c}{} & \multicolumn{1}{c}{{\footnotesize{}ACCOUNTS}} & \multicolumn{1}{c}{{\footnotesize{}TXNS}} & \multicolumn{1}{c}{{\footnotesize{}ACCT\_BAL}} & \multicolumn{1}{c}{{\footnotesize{}ACCT\_ID}}
\tabularnewline
\cline{2-5}
\textit{\footnotesize{}account} & 0.62 & 1.11 & 0.00 & 1.53\tabularnewline
\cline{2-5}
\textit{\footnotesize{}transaction} & 0.38 & 1.69 & 0.00 & 2.13 \tabularnewline
\cline{2-5}
\end{tabular}

$\textcolor{white}{Y_{f}=}$%
\begin{tabular}{c|c|c|c|c|c|}
\multicolumn{1}{c}{} & \multicolumn{1}{c}{{\footnotesize{}column}} & \multicolumn{1}{c}{{\footnotesize{}column}} & \multicolumn{1}{c}{{\footnotesize{}column}} & \multicolumn{1}{c}{{\footnotesize{}column}} & \multicolumn{1}{c}{{\footnotesize{}column}}\tabularnewline
\multicolumn{1}{c}{} & \multicolumn{1}{c}{{\footnotesize{}AMOUNT}} & \multicolumn{1}{c}{{\footnotesize{}DATE}} & \multicolumn{1}{c}{{\footnotesize{}FIRST}} & \multicolumn{1}{c}{{\footnotesize{}LAST}} & \multicolumn{1}{c}{{\footnotesize{}PAYEE}}\tabularnewline
\cline{2-6}
\textit{\footnotesize{}account} & -0.03 & 0.00 & -0.48 & -0.48 & -0.03\tabularnewline
\cline{2-6}
\textit{\footnotesize{}transaction} & -0.55 & 0.00 & -1.87 & -1.87 & -0.55\tabularnewline
\cline{2-6}
\end{tabular}

\medskip

\subsection{Generalized low-rank models}
\label{sec:glrm}

We now consider a matrix factorization approach based on the generalized low-rank model (GLRM) formalism \citep{Udell2016}.
These matrices are determined by minimizing the loss function
\begin{equation}
    \loss(\Xmatrix, \Ymatrix|A) = \sum_{(i,j)\in\Omega} L_{i,j}(\Xmatrix_{i,:} \Ymatrix_{j,:}, \datamatrix_{i,j})
    + \sum_{i=1}^{\ndatasets} r_i (\Xmatrix_{i,:}) 
    + \sum_{j=1}^{\nfeatures+\ntags} \tilde{r}_j (\Ymatrix_{j,:}),
\label{eq:glrm}
\end{equation}
where $\Omega$ is the observed set of \Cref{def:obs},
$\Xmatrix_{i,:}$ is the $i$th row of $\Xmatrix$,
$\Ymatrix_{i,:}$ is the $j$th row of $\Ymatrix$,
$L_{i,j}$ is the elementwise loss function for the matrix element $\datamatrix_{i,j}$,
$r_i$ is a regularizer the $i$th row of $\Xmatrix$,
and 
$\tilde{r}_j$ is the regularizer for the $j$th row of $\Ymatrix$.
The full generality of this formalism is described in detail in \citep{Udell2016}.

Unless otherwise stated, we specialize to the choice of logistic loss\footnote{Our definition varies from that in \citet{Udell2016}, as they consider non-missing entries of $\datamatrix$ to be over $\{\pm1\}$ and not $\{0,1\}$ as is done here.} $L(u, a) = \log(1 + \exp(-(2a-1)u))$
and $l_1$-regularizers $r(z; \regstrength) = \regstrength \sum_j \left\vert z_j \right\vert_1$.

The entry-wise nature of the GLRM formalism also admits a straightforward specialization to the problem of learning an approximately monosemic labelled topic model as in \Cref{def:1topicmodel},
simply by adding to \eqref{eq:glrm} generalized regularizers of the form
\begin{equation}
\begin{subequations}
    g_\sometag(y) = \begin{cases}
    0 & \mbox{if } \sum_{\sigma \ne \sometag}
             |y_\sigma| \ge 1, \\
    \infty & \mbox{else}
    \end{cases},\quad
    h_\sometag(y) = \begin{cases}
    0 & \mbox{if } y_\sometag = 1, \\
    \infty & \mbox{else}
    \end{cases}.
\end{subequations}
        \label{eq:penalty}
\end{equation}

Missing entries are not part of the observed set $\Omega$ and are excluded from the computation of the loss function $\loss$ of \eqref{eq:glrm} during training.
Thus, the model is not penalized with regard to missing tags, but is penalized for predicting features are present when they are actually absent.
The model is penalized for mispredicting the absence of both tags and features, when they are actually present.

\Cref{alg:glrm} describes a complete algorithm for the topic modeling approach we have presented.
A $k$-SVD of the imputed data matrix computed,
which is used to construct the GG-LSI factorization of \Cref{sec:gglsi},
which is in turn used as a warm start for the GLRM fitting procedure of \Cref{sec:glrm}
with randomized subsampling.

In the experiments below, we use a very simple imputation with a completely uninformative prior of
replacing each missing topic label with an entry of 0.5.
Alternative imputation methods are of course possible;
one noteworthy variant of potential future interest is the Lanczos expectation--maximization algorithm
for self-imputing $k$-SVD \citep{Kurucz2007}.
We do not consider alternative imputations further in this paper.
We also found a small amount of noise $\epsilon$ to be helpful for the GLRM training to avoid getting trapped prematurely in local minima.
Unless otherwise specified, we use noise sampled from the uniform distribution $D = [0, 0.001]$.

\begin{algorithm}
\SetAlgoLined
\KwData{Data matrix $\datamatrix$, noise distribution $D$,
regularization strength $\lambda$.}
\KwResult{The matrix factorization $(\Xmatrix, \Ymatrix)$.}
Impute missing values in $\datamatrix$.\\
Compute an approximate, low-accuracy $k$-SVD $\datamatrix \approx U S V^{\transpose}$ as in \eqref{eq:tsvd}.\\
Calculate the GG-LSI of \Cref{thm:lsi-gauge} with
$\datamatrix \approx \Xmatrix_0^{\transpose} \Ymatrix_0$,
$\Xmatrix_0 = V_\atag S U^{\transpose} + \epsilon$,
$\Ymatrix_0 = V_\atag V^{\transpose}$,
with some added elementwise noise $\epsilon\sim D$.\\
Select a randomized subsample $\tilde\Omega \subset \Omega$.\\
Fit the GLRM with loss function \eqref{eq:glrm} over the subset $\tilde\Omega$ with the penalty functions \eqref{eq:penalty} and regularization strength $\lambda$,
initialized with $(\Xmatrix, \Ymatrix) = (\Xmatrix_0, \Ymatrix_0)$ as defined above.
\caption{Topic modeling algorithm}
\label{alg:glrm}
\end{algorithm}

\subsection{Revisiting the simple example}
\label{sec:example2}

We now revisit the example introduced in \Cref{sec:example} with a detailed post-training analysis.
Using the GLRM formalism described above, we get a final loss of $\loss = 2.26$ (rounded to 2 decimals)
and the matrices:

$X=$%
\begin{tabular}{c|c|c|c|}
\multicolumn{1}{c}{} & \multicolumn{1}{c}{{\footnotesize{}A}} & \multicolumn{1}{c}{{\footnotesize{}B}} & \multicolumn{1}{c}{{\footnotesize{}C}}\tabularnewline
\cline{2-4} 
\textit{\footnotesize{}account} & {-2.66} & {0.55} & {2.24}\tabularnewline
\cline{2-4}
\textit{\footnotesize{}transaction} & {-1.87} & {3.22} & {1.00}\tabularnewline
\cline{2-4}
\end{tabular}

$Y_{t}=$%
\begin{tabular}{c|c|c|}
\multicolumn{1}{c}{} & \multicolumn{1}{c}{{\footnotesize{}tag}} & \multicolumn{1}{c}{{\footnotesize{}tag}}\tabularnewline
\multicolumn{1}{c}{} & \multicolumn{1}{c}{{\footnotesize{}account}} & \multicolumn{1}{c}{{\footnotesize{}transaction}}\tabularnewline
\cline{2-3} 
\textit{\footnotesize{}account} & 1.00 & 0.57\tabularnewline
\cline{2-3}
\textit{\footnotesize{}transaction} & -1.00 & 1.00\tabularnewline
\cline{2-3}
\end{tabular}

$Y_{f}=$%
\begin{tabular}{c|c|c|c|c|}
\multicolumn{1}{c}{} & \multicolumn{1}{c}{{\footnotesize{}table}} & \multicolumn{1}{c}{{\footnotesize{}table}} & \multicolumn{1}{c}{{\footnotesize{}column}} & \multicolumn{1}{c}{{\footnotesize{}column}}
\tabularnewline
\multicolumn{1}{c}{} & \multicolumn{1}{c}{{\footnotesize{}ACCOUNTS}} & \multicolumn{1}{c}{{\footnotesize{}TXNS}} & \multicolumn{1}{c}{{\footnotesize{}ACCT\_BAL}} & \multicolumn{1}{c}{{\footnotesize{}ACCT\_ID}}
\tabularnewline
\cline{2-5}
\textit{\footnotesize{}account} & -0.44 & -2.66 & 2.73 & 0.16\tabularnewline
\cline{2-5}
\textit{\footnotesize{}transaction} & -3.07 & 1.87 & 1.14 & 0.06 \tabularnewline
\cline{2-5}
\end{tabular}

$\textcolor{white}{Y_{f}=}$%
\begin{tabular}{c|c|c|c|c|c|}
\multicolumn{1}{c}{} & \multicolumn{1}{c}{{\footnotesize{}column}} & \multicolumn{1}{c}{{\footnotesize{}column}} & \multicolumn{1}{c}{{\footnotesize{}column}} & \multicolumn{1}{c}{{\footnotesize{}column}} & \multicolumn{1}{c}{{\footnotesize{}column}}\tabularnewline
\multicolumn{1}{c}{} & \multicolumn{1}{c}{{\footnotesize{}AMOUNT}} & \multicolumn{1}{c}{{\footnotesize{}DATE}} & \multicolumn{1}{c}{{\footnotesize{}FIRST}} & \multicolumn{1}{c}{{\footnotesize{}LAST}} & \multicolumn{1}{c}{{\footnotesize{}PAYEE}}\tabularnewline
\cline{2-6}
\textit{\footnotesize{}account} & 0.46 & 2.73 & 2.69 & 2.69 & 0.46\tabularnewline
\cline{2-6}
\textit{\footnotesize{}transaction} & 3.15 & 1.14 & -1.88 & -1.88 & 3.15\tabularnewline
\cline{2-6}
\end{tabular}

\medskip

\paragraph{Interpretation}

To understand the value of the loss function achieved, consider the properties of the logistic loss.
For large positive values of $x$, $L(0,x) \approx x$ and $L(1,-x) \approx x$.
The loss function therefore measures the disagreement between positive predictions and an observed false value, and negative predictions and an observed true value.
Each $x$ being evaluated is an odds ratio for a specific event.
Intuitively, $\loss$ is therefore the cumulative odds of having the observed data disagree with the model.

In this model of matrix factorization, the main interpretable result is contained in the $\Ymatrix$ matrix.
For example, $\Ytagmatrix(\mbox{\textit{transaction}}, \mbox{account}) = -1 < 0$
says that the presence of the ``account'' tag reduces the likelihood that the \textit{transaction} topic is present.
Similarly, the entries of the row 
$\Ytagmatrix(\mbox{\textit{account}}, :)$
describe how strongly the presence or absence is associated with the presence of the \textit{account} topic.
The features that most strongly predict the presence of the \textit{account} topic are (in descending order):

\begin{enumerate}
\item[1--2.]
The presence of the ACCT\_BAL and DATE columns,

\item[3--4.]
The presence of the FIRST and LAST columns, and

\item[5.]
The \textit{absence} of the TXNS table.

\end{enumerate}

\paragraph{Predicting missing tags}

Finally, we may use the results to predict missing tags, by calculating the probability that each tag should be assigned to each system.
To calculate the probabilities, construct the matrix $B = \Xmatrix'\Ymatrix_\atag$, whose entries contain the odds ratios for the tag to exist in that system.
% In other words, the negative of each entry, $-B_{ij}$, is the weight of evidence for the tag $j$ to exist in system $i$ \cite{Good1985}.
Finally, apply the logistic function to each element,%
\footnote{This definition of the logistic function differs from the standard one of $\pi(x) = 1/(1-\exp(-x))$, since the Boolean domain we use is over $\{0, 1\}$ and not $\{\pm 1\}$.}
$\pi(B_{i,j}) = 1/(1 + \exp(-(2B_{i,j}-1))),$
% \begin{equation}
%     \pi(B_{i,j}) = \frac 1 {1 + \exp(-(2B_{i,j}-1))},
%     \label{eq:logistic}
% \end{equation}
%
which is the probability that the label $j$ is present in system $i$.
In this example, the matrix of probabilities we obtain is $\Pi_\atag = (\pi(B_{i,j}))_{i,j} =$

\begin{tabular}{c|c|c|}
\multicolumn{1}{c}{} & \multicolumn{1}{c}{{\footnotesize{}tag}} & \multicolumn{1}{c}{{\footnotesize{}tag}}\tabularnewline
\multicolumn{1}{c}{} & \multicolumn{1}{c}{{\footnotesize{}account}} & \multicolumn{1}{c}{{\footnotesize{}transaction}}\tabularnewline
\cline{2-3}
{\footnotesize{}A}  & \textbf{0.99} & 0.42\tabularnewline
\cline{2-3}
{\footnotesize{}B}  & 0.06 & \textbf{0.97}\tabularnewline
\cline{2-3}
{\footnotesize{}C}  & \textbf{0.77} & \textbf{0.91}\tabularnewline
\cline{2-3}
\end{tabular}

\medskip

The probabilities in bold are greater than 0.5, indicating greater than even chance that the tag should exist.
In this example, the two declared tags (A, account) and (B, transaction) have probabilities greater than 0.5, whereas the missing pairs (A, transaction) and (B, transaction) are predicted not to exist. The probability that A should have the ``transaction'' tag is close to even, reflecting how the model picked up that system A has some features associated with the \textit{transaction} topic.
For our legacy system C, our model predicts that it should have both ``transaction'' and ``account'' labels.

A similar transformation of the matrix $B_\afeat = \Xmatrix'\Ymatrix_\afeat$ gives us probabilities that a feature should exist.
In our example, all features declared present in $\datamatrix$ have probabilities greater than 0.5,
and all features declared absent in $\datamatrix$ have probabilities less than 0.5, with the exception of (C, ACCT\_ID), which has probability 0.60.
This result may indicate that the legacy system C could be retrofit to introduce the ACCT\_ID column, in order to be better compatible with the expected data layout for transaction data.

% \section{Related work}

% The earliest use of matrix factorization methods in topic modeling appears to be \citep{Borko1962,Borko1963,Borko1964}.
% We report the first application of GLRMs to topio modeling
% we ar doing sparse nonnegative logistic pca
% cite the sparse logistic PCA papers here? \citep{Landgraf2019}
% Vijay Gadepally has some paper about sparsity promiting NMF?

%Relationship with clusteringp-...

% Relationship with kernel methods

% Why low rank?
%low rank models generally approximate well \citep{Udell2018}
% Tagging is a way to exploit low rank structure!

% \TODO{Say something about the confident learning and other robust machine learning techniques. Start with \url{https://l7.curtisnorthcutt.com/confident-learning}.}

% \TODO{Ontology shift}

% \TODO{Cite CorEx work}

\section{Randomized subsampling}
\label{sec:subsampling}

As described in \citet{Udell2016}, good matrix factorization results can be often obtained even when only a small fraction of the matrix elements are observed.
This result suggests that the original problem of minimizing \eqref{eq:glrm} over the entire set of observed entries $\Omega$ can be replaced by a subsample $\Omega^\prime \subset \Omega$.

The key insight is that a matrix $\somematrix$ of size $\nx \times \ny$ and rank $\nrank$ has much fewer than $\nx\ny$ independently observable entries, and therefore can be reconstructed by knowing $N \ll \nx\ny$ matrix elements.
The choice of minimal measurement number $N$ has been the subject of intense study in recent years \citep{Recht2011,Davenport2016}.
But first, here is an elementary argument for why $N$ must be at least $\nrank (\nx + \ny - \nrank)$.
This observation is not new \citep{Candes2009}, but we provide here a proof using elementary linear algebra.

\begin{theorem}
\label{thm:lowrank}
Let $\somematrix \in \mathbb{R}^{\nx \times \ny}$ be a real $\nx \times \ny$ matrix of rank $\nrank$.
Then, $\somematrix$ is fully parameterized by $\nrank (\nx + \ny - \nrank - 1)$ real numbers and an additional $\nrank$ positive real numbers.
\end{theorem}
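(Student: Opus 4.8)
The plan is to prove the slightly stronger statement that the set of rank-$\nrank$ matrices in $\mathbb R^{\nx\times\ny}$ is, on each of finitely many coordinate charts, the image of an \emph{injective} map out of $\mathbb R^{\nrank(\nx+\ny-\nrank-1)}\times\mathbb R_{>0}^{\nrank}$, which both exhibits the parameterization and shows the count is tight rather than merely an upper bound. The construction is a rank factorization put into a column-echelon normal form. First I would use the hypothesis that $\somematrix$ has rank $\nrank$ to pick $\nrank$ linearly independent columns; after permuting the columns of $\somematrix$ — a discrete choice contributing nothing to the continuous count — write $\somematrix=\begin{pmatrix}\somematrix_1 & \somematrix_2\end{pmatrix}$ with $\somematrix_1\in\mathbb R^{\nx\times\nrank}$ of full column rank and $\somematrix_2\in\mathbb R^{\nx\times(\ny-\nrank)}$. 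Since the columns of $\somematrix_1$ span the column space of $\somematrix$, there is a unique $\tilde C\in\mathbb R^{\nrank\times(\ny-\nrank)}$ with $\somematrix_2=\somematrix_1\tilde C$, so $\somematrix=\somematrix_1\begin{pmatrix}I_\nrank & \tilde C\end{pmatrix}$.

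Next I would normalize the columns of $\somematrix_1$. Full column rank forces every column of $\somematrix_1$ to be nonzero, so letting $d_i>0$ be the Euclidean norm of its $i$th column, $D=\mathrm{diag}(d_1,\dots,d_\nrank)$, and $\hat B=\somematrix_1 D^{-1}$, the matrix $\hat B$ has unit-norm columns and $\somematrix=\hat B\,D\begin{pmatrix}I_\nrank & \tilde C\end{pmatrix}$. The free data are now: $\hat B$, which is $\nx\nrank$ entries subject to the $\nrank$ unit-norm constraints, i.e.\ $\nrank(\nx-1)$ real parameters; the block $\tilde C$, which is $\nrank(\ny-\nrank)$ real parameters; and $D$, which is $\nrank$ positive real parameters. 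Summing the first two gives $\nrank(\nx-1)+\nrank(\ny-\nrank)=\nrank(\nx+\ny-\nrank-1)$ real numbers, plus the $\nrank$ positive numbers in $D$, exactly as claimed.

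Then I would close the loop in both directions. Any triple $(\hat B,D,\tilde C)$ of the stated shapes, together with a column permutation, yields a matrix of rank exactly $\nrank$, because $\hat B D$ has full column rank and $\begin{pmatrix}I_\nrank & \tilde C\end{pmatrix}$ full row rank; conversely the triple is recovered from $\somematrix$ uniquely — $D$ from the norms of the chosen $\nrank$ columns, $\hat B$ by dividing those columns by their norms, and $\tilde C$ as the unique solution of the consistent system $\somematrix_1\tilde C=\somematrix_2$. The injectivity on each chart is what upgrades the parameterization from an upper bound on the parameter count to an exact count, and thus supports the intended conclusion that no measurement scheme recording fewer than $\nrank(\nx+\ny-\nrank)$ numbers can pin down a generic rank-$\nrank$ matrix.

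The argument is essentially routine; the only things needing care are the bookkeeping for the column permutation — one should note that the $\binom{\ny}{\nrank}$ charts obtained by choosing which $\nrank$ columns are made leading together cover all rank-$\nrank$ matrices, each chart with the same parameter count — and verifying that the normalization step is well defined, which is exactly where full column rank (no zero column) enters. An alternative route is to count dimensions on the reduced singular value decomposition $\somematrix=USV\transpose$: the singular values supply the $\nrank$ positive reals, while $U$ and $V$ range over Stiefel manifolds of dimensions $\nx\nrank-\binom{\nrank+1}{2}$ and $\ny\nrank-\binom{\nrank+1}{2}$, summing to $\nrank(\nx+\ny-\nrank)-\nrank$ reals; this is conceptually cleaner but needs the Stiefel dimension formula and a remark handling repeated singular values, so I would keep the rank-factorization argument as the main one.
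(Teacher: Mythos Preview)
Your proposal is correct, and your alternative route at the end is in fact the paper's actual proof: the paper argues via the truncated SVD $\somematrix = U S V\transpose$, counting $\nrank$ positive reals for $S$ and computing the Stiefel-manifold degrees of freedom for $U$ and $V$ column by column (first column $\nx-1$, second $\nx-2$, \dots), then summing. Your main argument instead uses a rank factorization in column-echelon form, $\somematrix = \hat B\,D\begin{pmatrix}I_\nrank & \tilde C\end{pmatrix}$ after a column permutation. This is more elementary---no orthogonality or Householder machinery, just linear algebra over selected columns---and it delivers the extra mileage of an explicit injective chart, so the count is exact rather than an upper bound; the paper's SVD route is shorter to state but, as you note yourself, leaves the non-uniqueness of the SVD (sign flips, repeated singular values) unaddressed when claiming the parameters are ``independently chosen with no further restrictions.'' One small caveat on your reverse direction: the assertion that ``any triple $(\hat B,D,\tilde C)$ of the stated shapes \dots\ yields a matrix of rank exactly $\nrank$'' needs $\hat B$ to have linearly independent columns, not merely unit-norm columns, so the chart domain is the open subset of $(S^{\nx-1})^\nrank$ where the columns are independent; this does not change the parameter count but should be stated.
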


\begin{proof}
A matrix of rank $\nrank$ can be factorized exactly into the truncated SVD \eqref{eq:tsvd} with $\nrank$ positive singular values \cite[\S 2.5.5, pp. 72--73]{Golub2013}, and is fully described by the parameters of $U$, $S$ and $V$.
Any $\nrank\times\nrank$ positive diagonal matrix $S$ can be parameterized by $\nrank$ positive real numbers.
Next, the space of orthonormal $U$s can be characterized by considering the columnwise construction of $U$,
akin to the orthogonalization procedure using Householder reflections \cite[\S 5.1.2, p. 209]{Golub2013}.
The first column of the orthonormal matrix $U$ has $\nx-1$ degrees of freedom (one normalization constraint), whereas the second column has $\nx-2$ degrees of freedom (normalization and orthogonality with the first column).
By induction, the $p$th column of $U$ has $\nx-p$ degrees of freedom, since it has to be normalized and orthogonal to the first $p-1$ columns.
Therefore, $U$ is parameterized by  $\sum_{p=1}^\nrank (\nx - p) = \nrank (\nx - (\nrank+1)/2)$ real numbers.
Similarly, $V$ is uniquely described by $\nrank (\ny - (\nrank+1)/2)$ real numbers.
Since the parameters of $U$, $S$ and $V$ can be independently chosen with no further restrictions, the result follows from counting the total number of parameters across the three matrices.
\end{proof}

\citet{Vandereycken2013} further proves that the space of all rank-$\nrank$ matrices forms a smooth manifold.

\Cref{thm:lowrank} above provides a reasonable minimum for the number of observations $N$.
For the problem of \textit{weak recovery}, i.e.,
to reconstruct one particular matrix,
the results of \citep{Xu2018},
as extended to rectangular matrices \citep{Rong2019},
use algebraic geometry to demonstrate that that $N \ge \nrank(\nx+\ny-\nrank)$ is sufficient (but not necessary), so long as a suitable basis can be constructed for the measurement operator \citep{Conca2015}. Similar results hold probabilistically in \citet{Eldar2012,Riegler2015}.
However, the results above do not specify which matrix elements of $\somematrix$ to observe.
Nevertheless, there are results to show that a constant factor of this threshold suffices for uniform random sampling. Empirical results of \citep{Recht2010} show recovery when $N \approx 2 \nrank(\nx+\ny-\nrank)$.
\citet{Candes2011} show that randomized subsampling with some constant factor, $N \propto \max(\nx, \ny) \nrank$ suffices.
Similarly, \citet{Davenport2014} show that for binary data matrices, given a constant error threshold, the number of necessary samples is $N \propto \max(\nx + \ny) \nrank$.

The results above are proved in a very general setting of measurement, but for our purposes, it suffices to interpret the need for a suitable basis as choosing $N$ different coordinates of the data matrix $\datamatrix$.
We therefore choose the lowest bound for weak reconstruction \citep{Candes2011,Davenport2014}:
\begin{equation}
    N = C \nrank (\nx + \ny - \nrank),
    \label{eq:subsample}
\end{equation}
where the elements of $\Omega^\prime$ are sampled uniformly at random with replacement \footnote{Experiments using sampling without replacement demonstrated significantly worse results.} from the coordinates of the observed entries, $\Omega$, with $C$ an empirical constant to be estimated.
% For our matrix, we obtain $\sizeof{\Omega^\prime} = 193,112,952$ with $C=1$, a $26.1\times$ compression factor from the full matrix with $\sizeof{\Omega} = 5,036,624,427$.

To justify this procedure of random subsampling,
we study how the fitting of a rank-$k=1$ GLRM changes as a function of the number of samples, $N$,
from the Kaggle data set described in the next section.
The element-wise nature of the loss function $\loss$ \eqref{eq:glrm} means that we would expect that the loss $\loss$ should scale linearly with $N$, both in terms of magnitude and in computational cost, since \eqref{eq:glrm} is a direct summation over observed entries.
%This expectation is mostly demonstrated in the timing graph in \Cref{fig:subsampling_time}, showing largely linear behavior for $N \ge 10^6$.
% The thin grey line shows a rough estimate for the asymptotic behavior as $t = 10^{-3.6} N$ (for running on 1 CPU core).
Empirical timings on a single machine suggest that computing the objective function over all observed samples in the matrix would take approximately 12 days, which is clearly impractical without significant parallelization.
The converged final values of the loss per sample, $\loss/N$, in \Cref{fig:subsampling_finalobj}, show more complicated behavior.
For $N \le 3$ there is a phase of catastrophic failure to fit, where the loss does not decrease very much from the initial value.
For $5 \le N \le \ceil{10^{4.25}} = 17783$ there seems to be a region of overfitting to the specific points being sampled.
There is then a transitional zone which stabilizes at $N \ge \ceil{10^{5.5}} = 316228$, above the choice $N=185270$ resulting from \eqref{eq:subsample} for rank $r=1$ with $C=1$, but below that choice for $C=2$. We therefore take $C=2$ to be a safe estimate for the necessary constant in \eqref{eq:subsample} to use randomized subsampling.
Our results seem to confirm the phase transition behavior reported in \citep{Recht2010}, albeit with three different phases, not two.

\begin{figure}
    \centering
    \includegraphics[keepaspectratio,width=0.7\columnwidth]{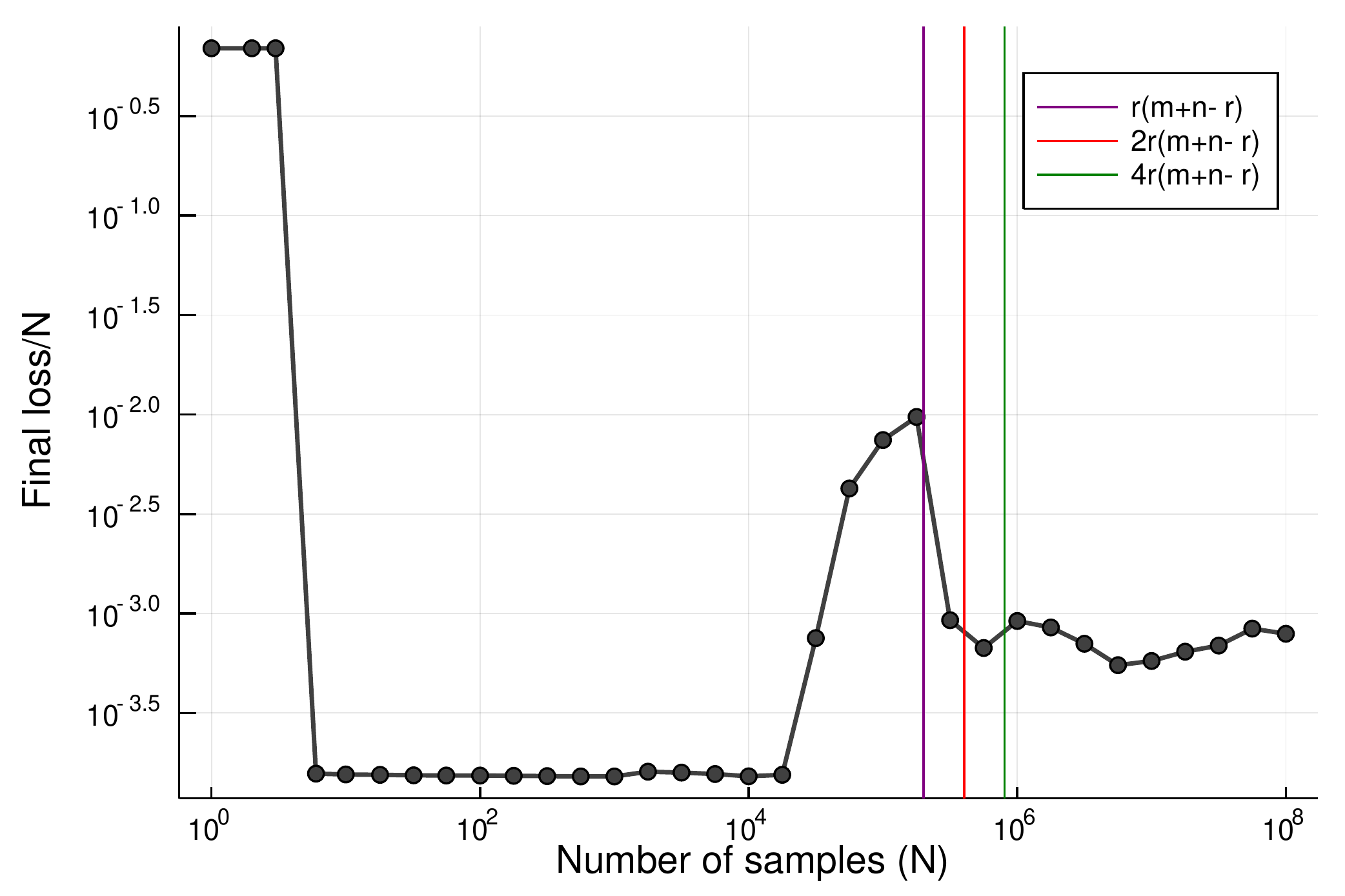}
    
    \caption{The final loss $\loss/N$ attained by sparse proximal gradient descent for a rank-1 generalized low-rank model with no regularization, as a function of the number of samples, $N = \sizeof{\Omega^\prime}$.}
    \label{fig:subsampling_finalobj}
\end{figure}

\begin{figure}
    \centering
    \includegraphics[keepaspectratio,width=0.7\columnwidth]{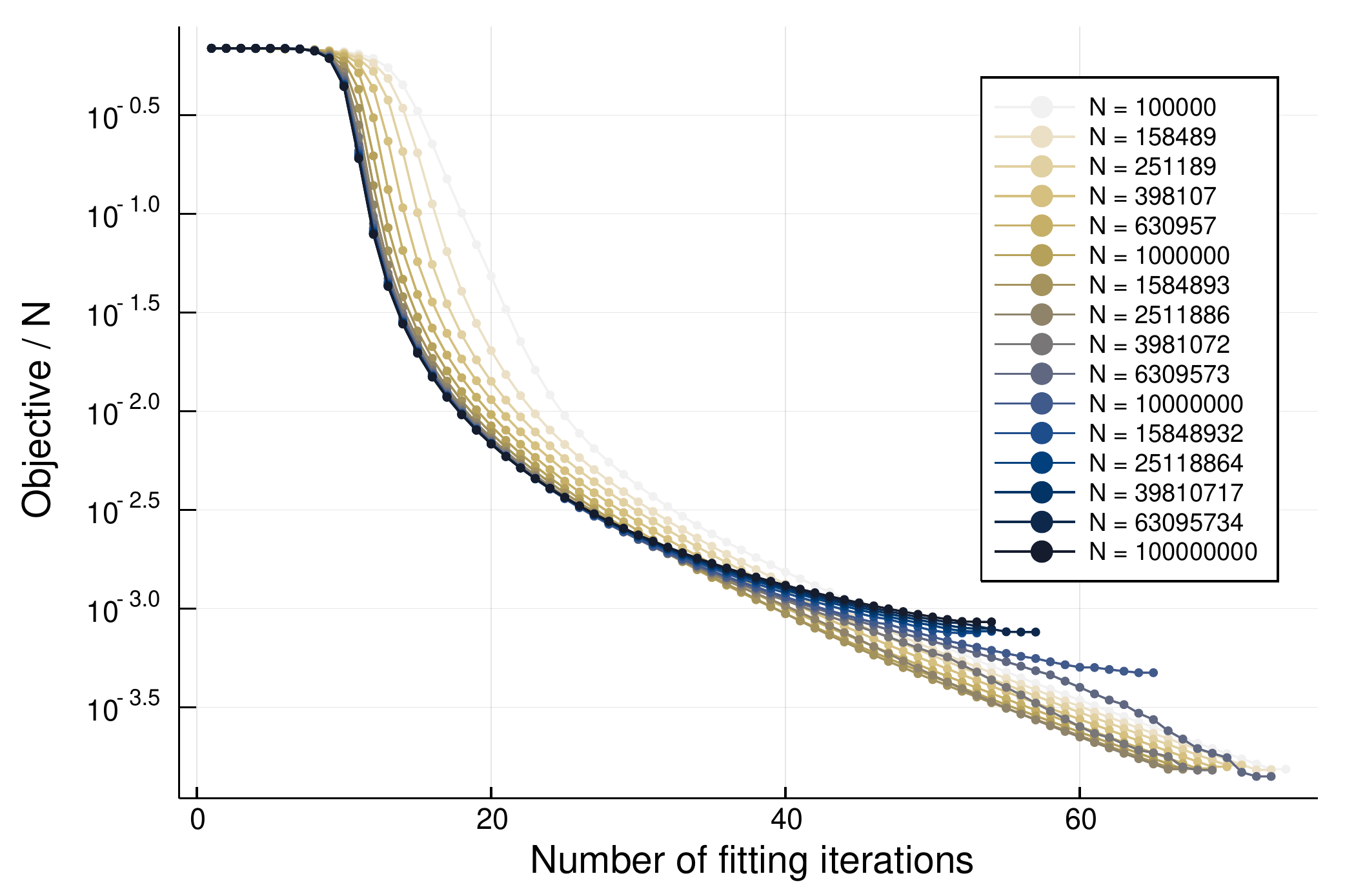}
    
    \caption{Convergence history of the loss $\loss/N$ attained by sparse proximal gradient descent for a rank-10 generalized low-rank model with no regularization, as a function of the number of samples, $N = \sizeof{\Omega^\prime}$.}
    \label{fig:subsampling_fit}
\end{figure}

% \begin{figure}
%     \centering
%     \includegraphics[keepaspectratio=true,width=\columnwidth,height=0.9\textheight]{}
    
%     \caption{Total time taken to fit a rank-1 generalized low-rank model with no regularization, as a function of the number of samples, $N = \sizeof{\Omega^\prime}$.}
%     \label{fig:subsampling_time}
% \end{figure}

\section{Semantic analysis of data sets on Kaggle}
\label{sec:kaggle}

In this section, we describe a data set curated from the metadata of data sets from Kaggle (\url{kaggle.com}).
The data set is comprised of metadata from 25,161 datasets that were accessible via the Kaggle Python API (v1.5.6) \citep{kaggleapi} as of December 4, 2019.
For each dataset, we collected 857 unique semantic tags, 44,399 unique file names, and 155,711 column names for each file that comprise the data set.
% \Cref{fig:kaggle-screenshot} shows a screenshot from the Kaggle website, showing where these metadata appear.
We collected these metadata programmatically through the Kaggle API and preprocessed the data to strip invalid Unicode characters.
% as described in \Cref{sec:preprocess}.

Some representative examples are listed in \Cref{tab:my_label}. We see examples of data quality issues that are typical of collaborative tagging use cases as well as enterprise data catalogues, such as
datasets without subject tags,
file names without column names, and
column names without file names.
% \begin{enumerate}
%     \item datasets without tags,
%     \item file names without column names, and
%     \item column names without file names.
% \end{enumerate}
%
This data set therefore is a suitable candidate for the topic modeling approach we have described in \Cref{sec:glrm}.

% \begin{figure}
%     \centering
%     \includegraphics[keepaspectratio=true,width=\columnwidth,height=0.9\textheight]{}
%     \caption{Screenshot from the Kaggle website for the College Basketball Dataset \citep{kaggledata}, showing metadata on semantic tags, file names and column names in red rectangles.}
%     \label{fig:kaggle-screenshot}
% \end{figure}

\begin{table}
    \centering
    {\tiny
    \begin{tabular}{|c|c|c|c|}
    \hline 
    Dataset & tags & filenames & columns\tabularnewline
    \hline 
    4quant/depth-generation & image data & \multirow{3}{*}{depth\_training\_data.npz } & \multirow{3}{*}{-}\tabularnewline
    -lightfield-imaging & leisure &  & \tabularnewline
     & neural networks &  & \tabularnewline
    \hline 
    \multirow{4}{*}{colamas/index1min20190828} & \multirow{4}{*}{-} & CA.csv & Close\tabularnewline
     &  & HI.csv & High\tabularnewline
     &  & NQ.csv & Volume\tabularnewline
     &  &  & ... (8 total)\tabularnewline
    \hline 
    \multirow{4}{*}{dbahri/wine-ratings} & \multirow{4}{*}{-} & test.csv & Portugal\tabularnewline
     &  & train.csv & complex\tabularnewline
     &  & validation.csv & unoaked\tabularnewline
     &  &  & ... (48 total)\tabularnewline
    \hline 
    \multirow{4}{*}{insiyeah/musicfeatures} &  &  & chrome\_stft\tabularnewline
     & music & data.csv & mfcc17\tabularnewline
     & musical groups & data\_2genre.csv & spectral\_centroid\tabularnewline
     &  &  & ... (25 total) \tabularnewline
    \hline 
    \multirow{4}{*}{jackywang529/asdflkjaslj12dsfa} & \multirow{4}{*}{-} & \multirow{4}{*}{-} & 1\tabularnewline
     &  &  & 2\tabularnewline
     &  &  & amanda\tabularnewline
     &  &  & apple (4 total)\tabularnewline
    \hline 
    mrugankray/license-plates & law & - & -\tabularnewline
    \hline 
    nltkdata/wordnet & - & - & -\tabularnewline
    \hline 
    okc35fjh/pearson & education & pearson.xlsx & -\tabularnewline
    \hline 
    \end{tabular}
    }
    \caption{Some representative datasets and their metadata from \url{Kaggle.com} as collected using version 1.5.6 of the Kaggle API \citep{kaggleapi}.}
    \label{tab:my_label}
\end{table}

% \Cref{sec:implement} describes further implementation details that allow multithreaded parallel computation in Julia.

\subsection{Exploratory data analysis}

\Cref{fig:eda} shows the top 10 tags and features in our data set, as well as the cumulative frequencies.
It is interesting to note that the tags and features have quite different distributions.
While the rank ordering of features shows a power law tail distribution $r^{-\alpha_f}$ in the rank $r$ with exponent $\alpha_f = 2.07 \pm 0.01$,
the tags are much more heavy-tailed, with a corresponding exponent of $\alpha_t = 0.853 \pm 0.001$.
The presence of many rare tags poses a challenge for the standard approach of training an independent classifier
for each tag:
the classifiers for rare tags do not have sufficient signal in the training data,
resulting in poor performance overall.

% \begin{figure*}
% \centering
%     \begin{subfigure}[t]{0.24\linewidth}
%     \centering
%     \includegraphics[keepaspectratio=true,width=\linewidth,height=\textheight]{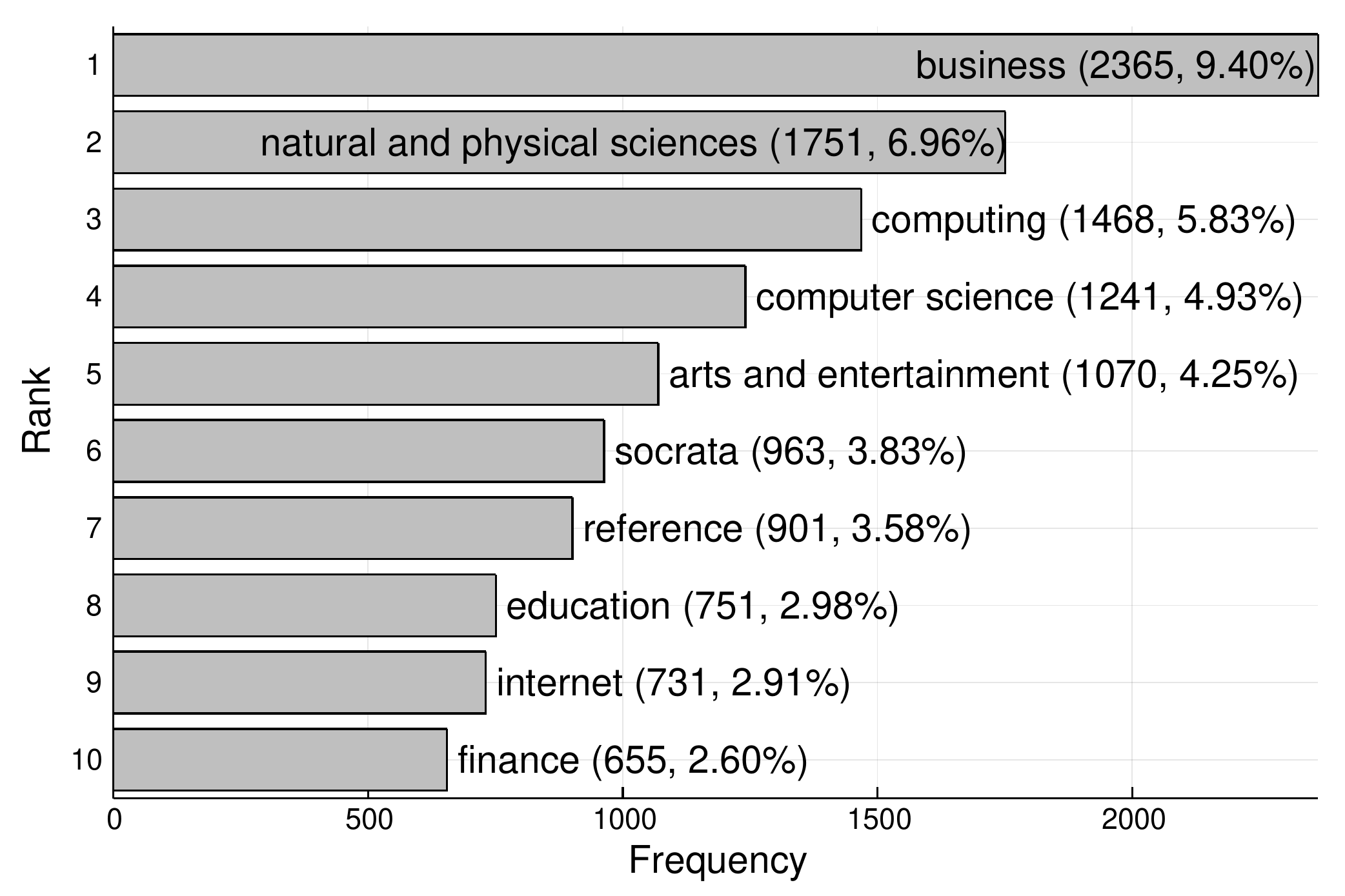}
%     \caption{Top 10 tags}
%     \end{subfigure}
%     ~
%     \begin{subfigure}[t]{0.24\linewidth}
%     \centering
%     \includegraphics[keepaspectratio=true,width=\linewidth,height=\textheight]{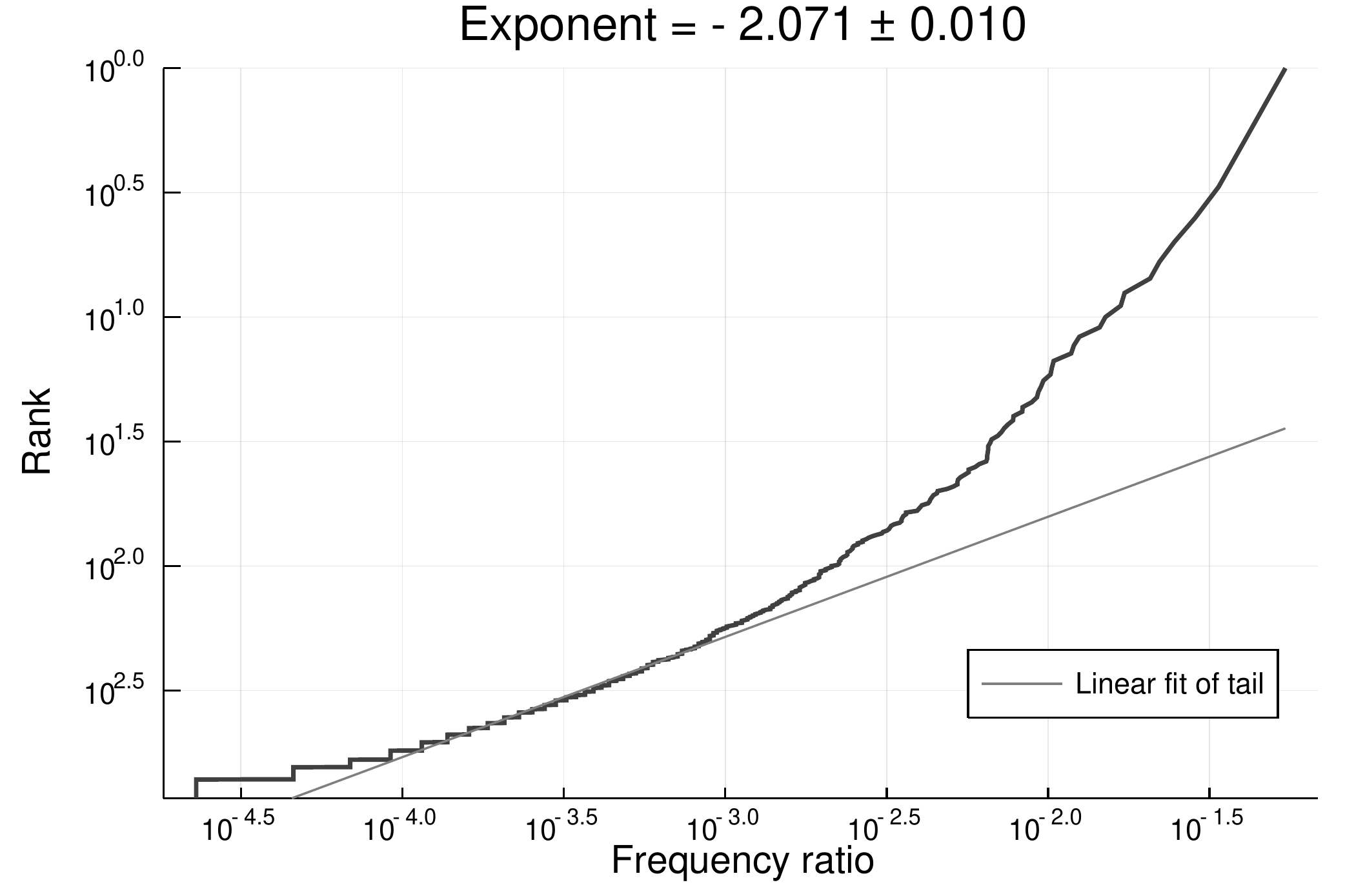}
%     \caption{Rank---frequency distribution of tags}
%     \end{subfigure}
%     ~
%     \begin{subfigure}[t]{0.24\linewidth}
%     \centering
%     \includegraphics[keepaspectratio=true,width=\linewidth,height=\textheight]{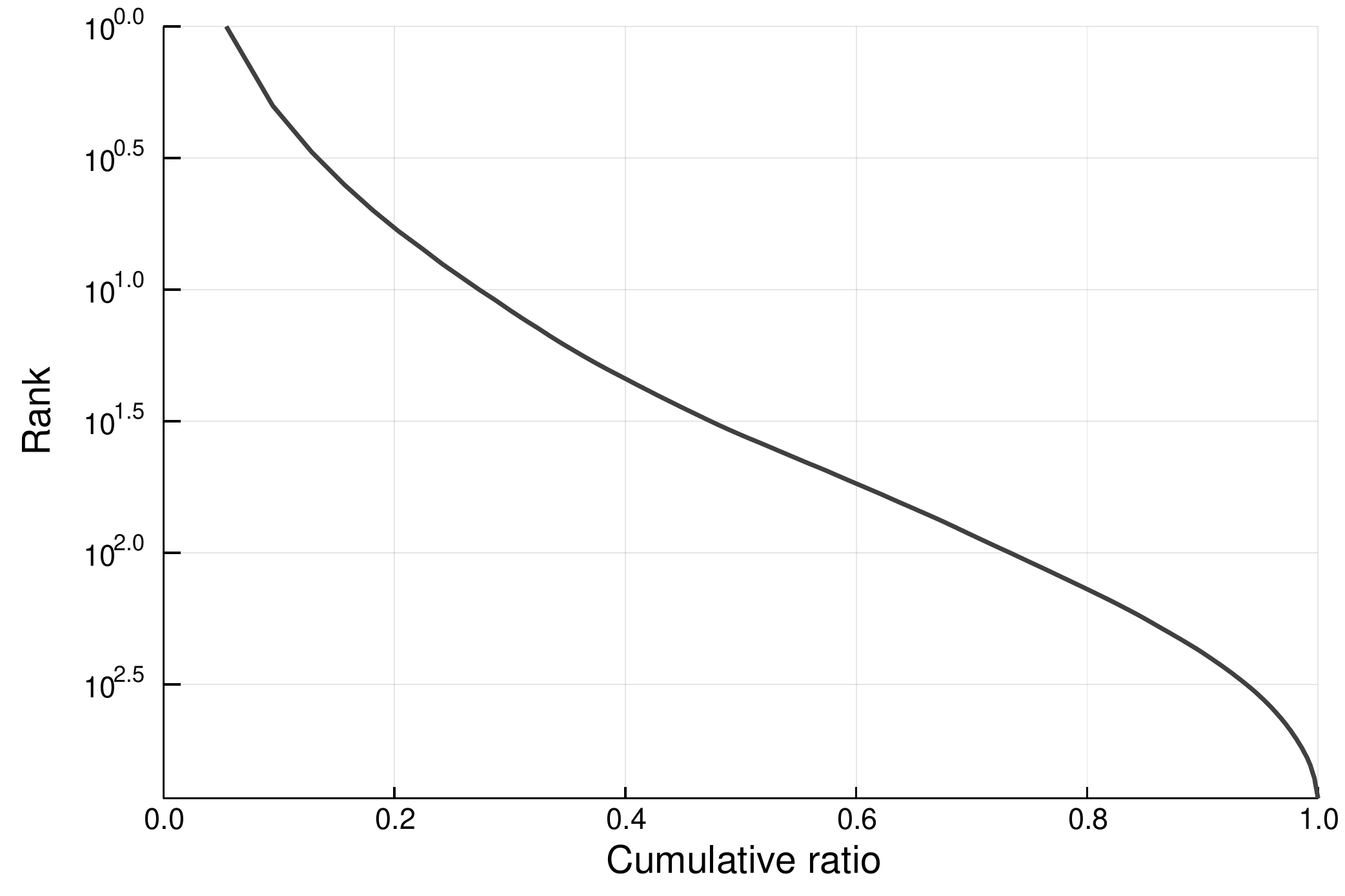}
%     \caption{Cumulative distribution of tags}
%     \end{subfigure}

%     \begin{subfigure}[t]{0.24\linewidth}
%     \centering
%     \includegraphics[keepaspectratio=true,width=\linewidth,height=\textheight]{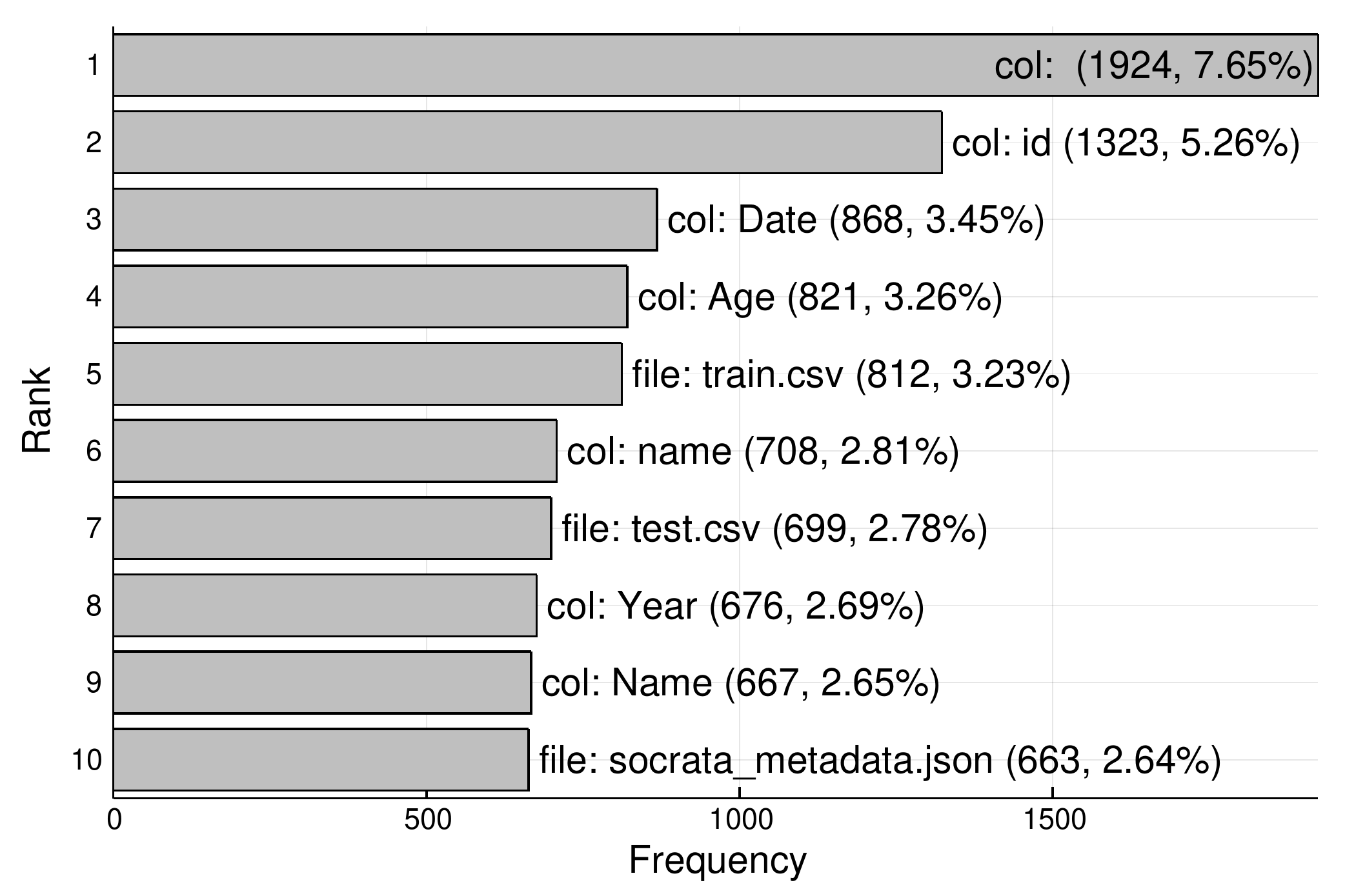}
%     \caption{Top 10 features}
%     \end{subfigure}
%     ~
%     \begin{subfigure}[t]{0.24\linewidth}
%     \centering
%     \includegraphics[keepaspectratio=true,width=\linewidth,height=\textheight]{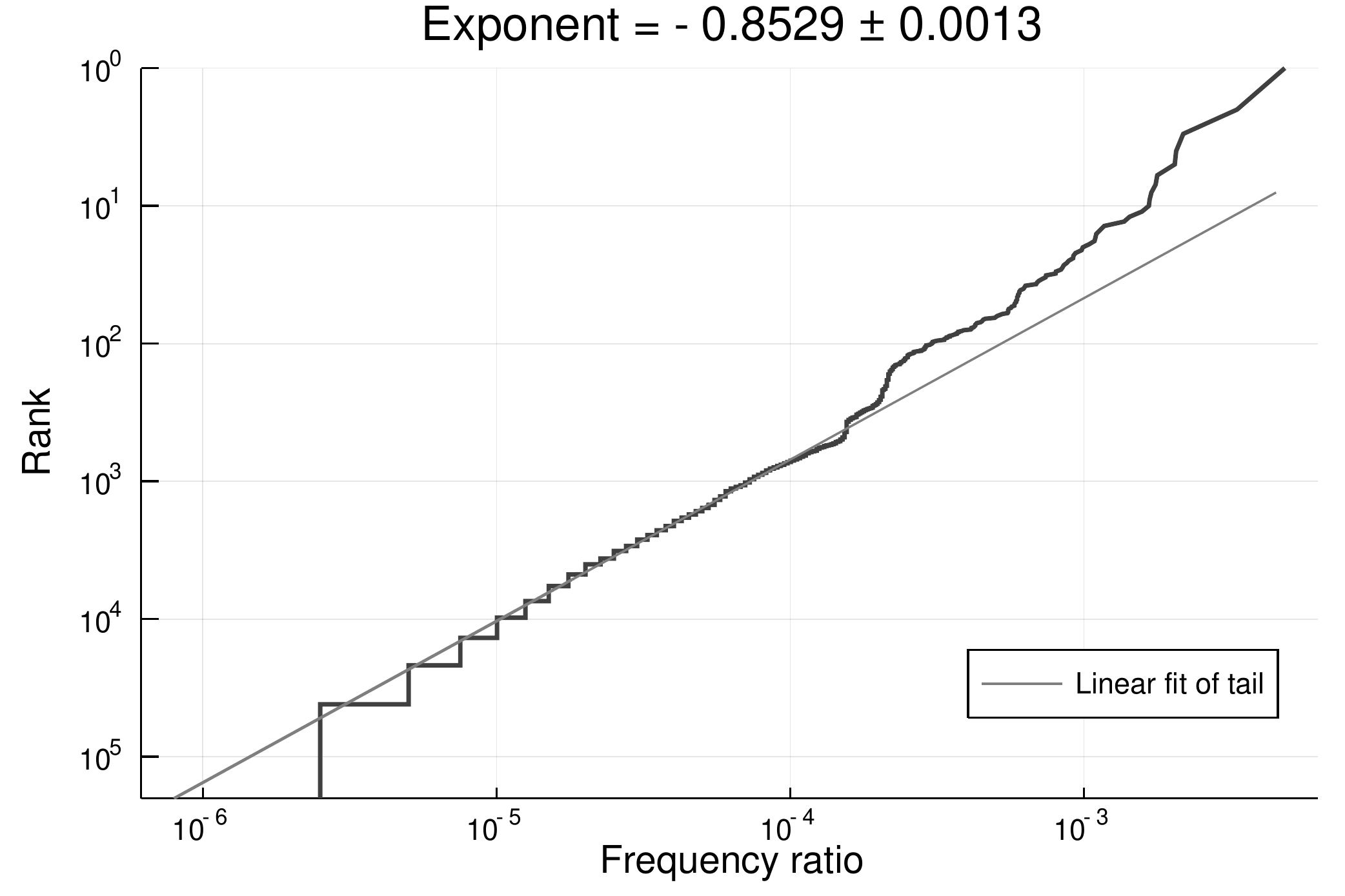}
%     \caption{Rank---frequency distribution of features}
%     \end{subfigure}
%     ~
%     \begin{subfigure}[t]{0.24\linewidth}
%     \centering
%     \includegraphics[keepaspectratio=true,width=\linewidth,height=\textheight]{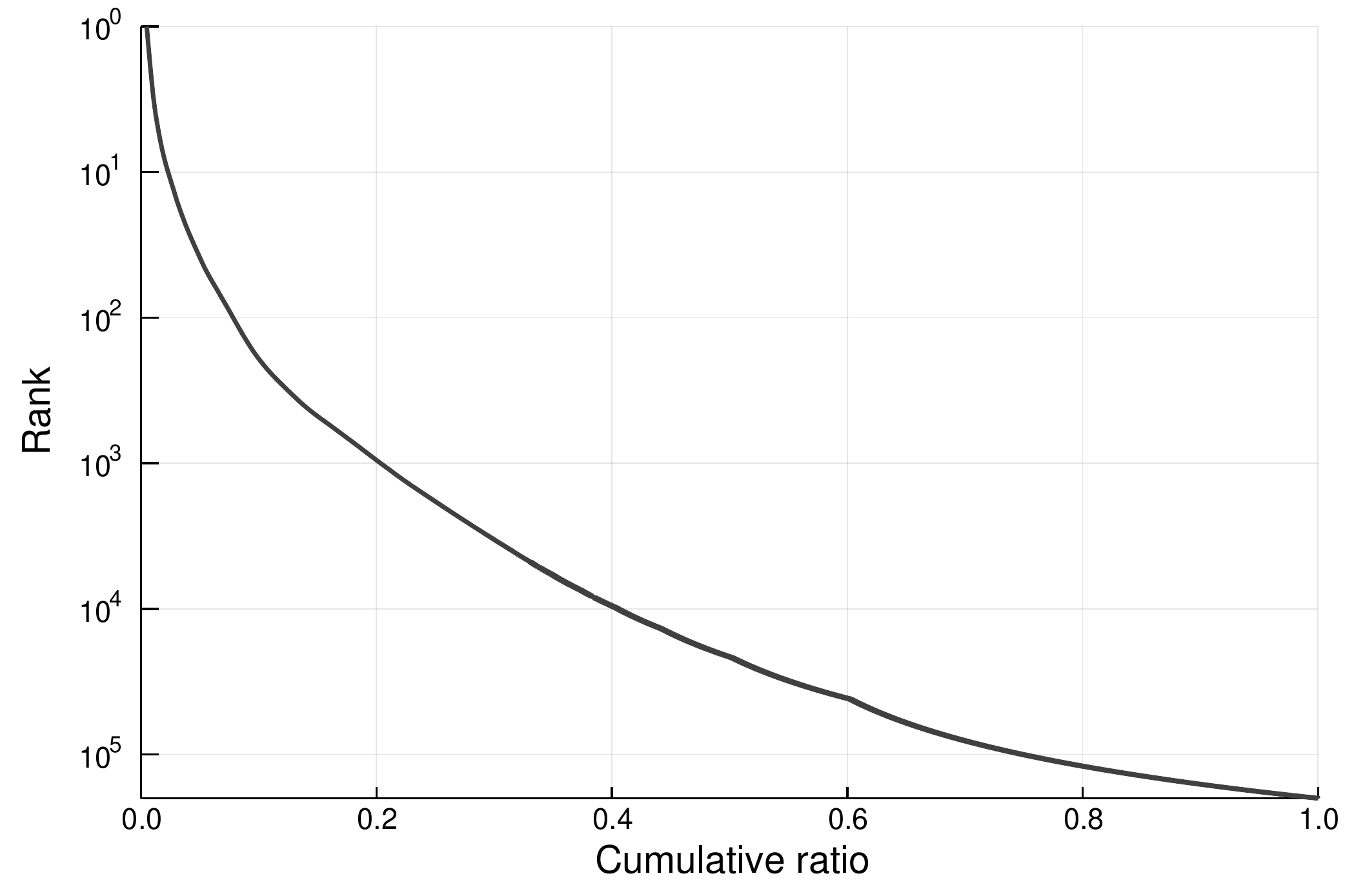}
%     \caption{Cumulative distribution of features}
%     \end{subfigure}

% \caption{Exploratory analysis of the Kaggle dataset.}
% \label{fig:eda}
% \end{figure*}

\begin{figure}



\centering
\includegraphics[keepaspectratio=true,width=0.45\linewidth,height=\textheight]{fig/eda/1-tags-top10}%
\includegraphics[keepaspectratio=true,width=0.45\linewidth,height=\textheight]{fig/eda/1-features-top10}

\includegraphics[keepaspectratio=true,width=0.45\linewidth,height=\textheight]{fig/eda/1-tags-pdf}%
\includegraphics[keepaspectratio=true,width=0.45\linewidth,height=\textheight]{fig/eda/1-features-pdf}

\includegraphics[keepaspectratio=true,width=0.45\linewidth,height=\textheight]{fig/eda/1-tags-cdf}%
\includegraphics[keepaspectratio=true,width=0.45\linewidth,height=\textheight]{fig/eda/1-features-cdf}

\caption{Exploratory analysis of the Kaggle dataset.
Top: top 10 most frequent tags (left) and features (right).
Middle: rank--frequency distribution of tags and features.
Bottom: cumulative distributions of tags and features.}
\label{fig:eda}
\end{figure}

\subsection{Topics}

\Cref{fig:topic-banking} shows three typical topics for three different tags
of different frequencies of usage: ``finance'' (used 655 times), ``astronomy'' (used 70 times),
and ``cardiology'' (used 2 times).
For each topic, we show the top five most important and bottom five least important features (first row),
and also the the latent strengths (coefficients) of co-occurring with other tags (second row).
For example, the ``finance'' tag correlates positively with the ``law and government'' tag,
as well as the column names ``year'', ``project\_is\_approved'' and ``project\_title'',
and is negatively correlated with tags ``law'' and ``universities and colleges'',
as well as the column name ``day'', ``sex'' and ``target''.

Remarkably, the model is able to yield meaningful results using a completely unsupervised technique that has no conception of natural language or semantics.
Furthermore, the least important features also semantically very generic, such as the filename ``test.csv'' or column ``date''.
In addition, the topics seem to associate with meaningful features, such as variable names suggestive of particle accelerations for the ``astronomy'' topic.
The results suggest that even rare topics can be learned with meaningful signal,
which would not have been the case using a more naive approach of training an independent classifier for each topic.
With such a long tail of rare topics shown in \Cref{fig:eda}, most of the classifiers would have little meaningful training data to learn from.

\begin{figure}
\centering
\includegraphics[keepaspectratio=true,width=0.45\linewidth,height=\textheight]{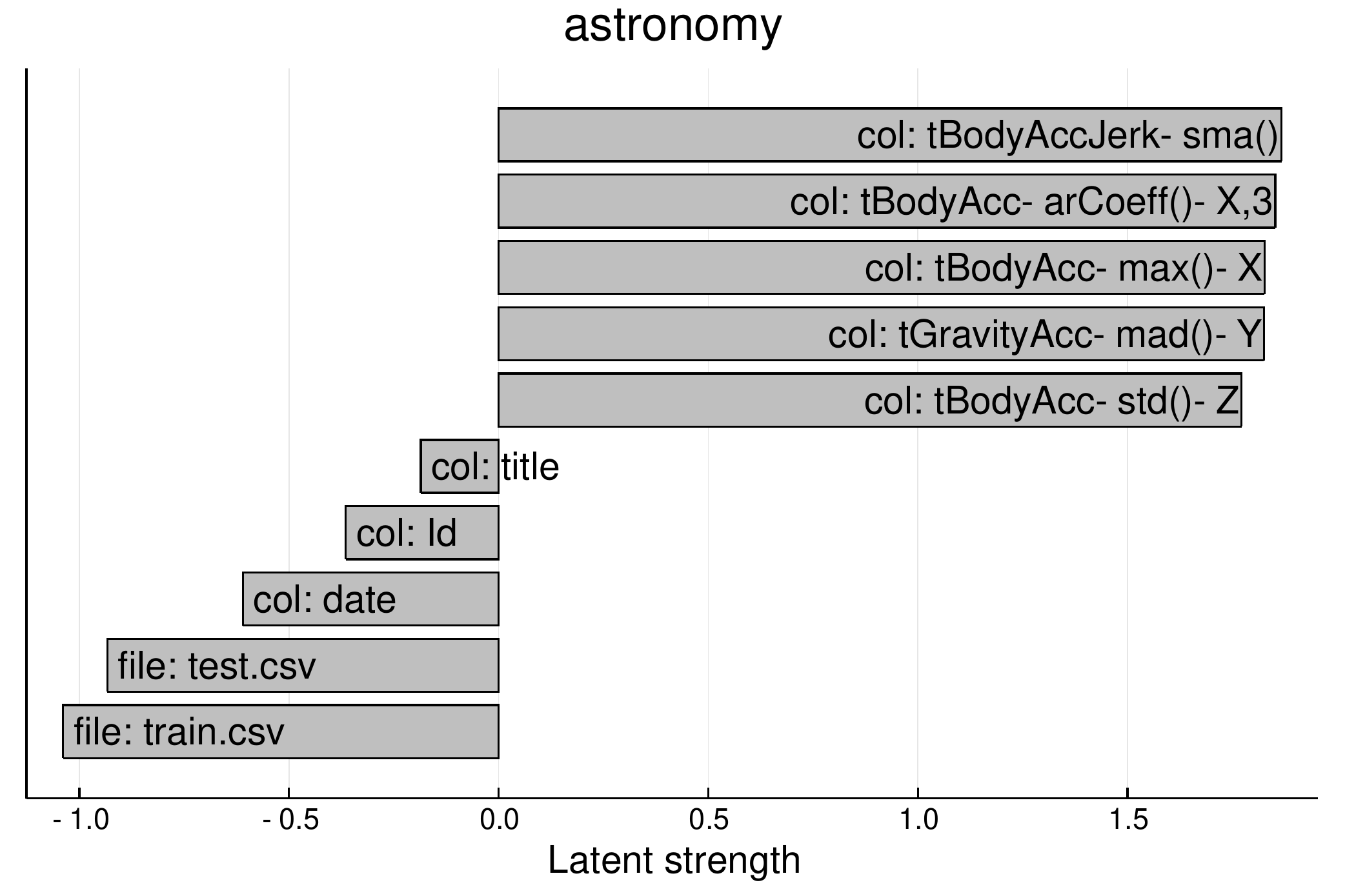}%
\includegraphics[keepaspectratio=true,width=0.45\linewidth,height=\textheight]{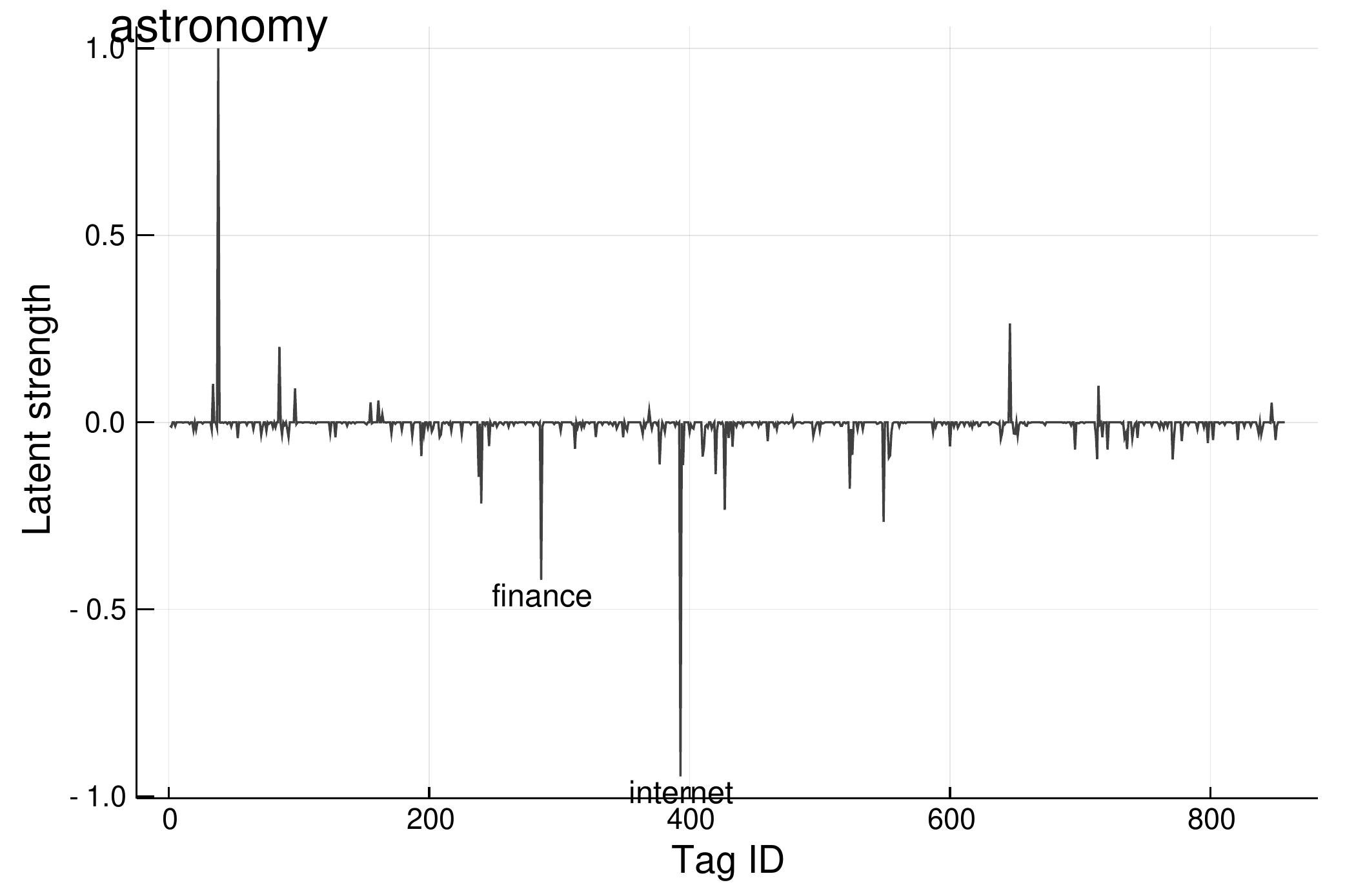}
\includegraphics[keepaspectratio=true,width=0.45\linewidth,height=\textheight]{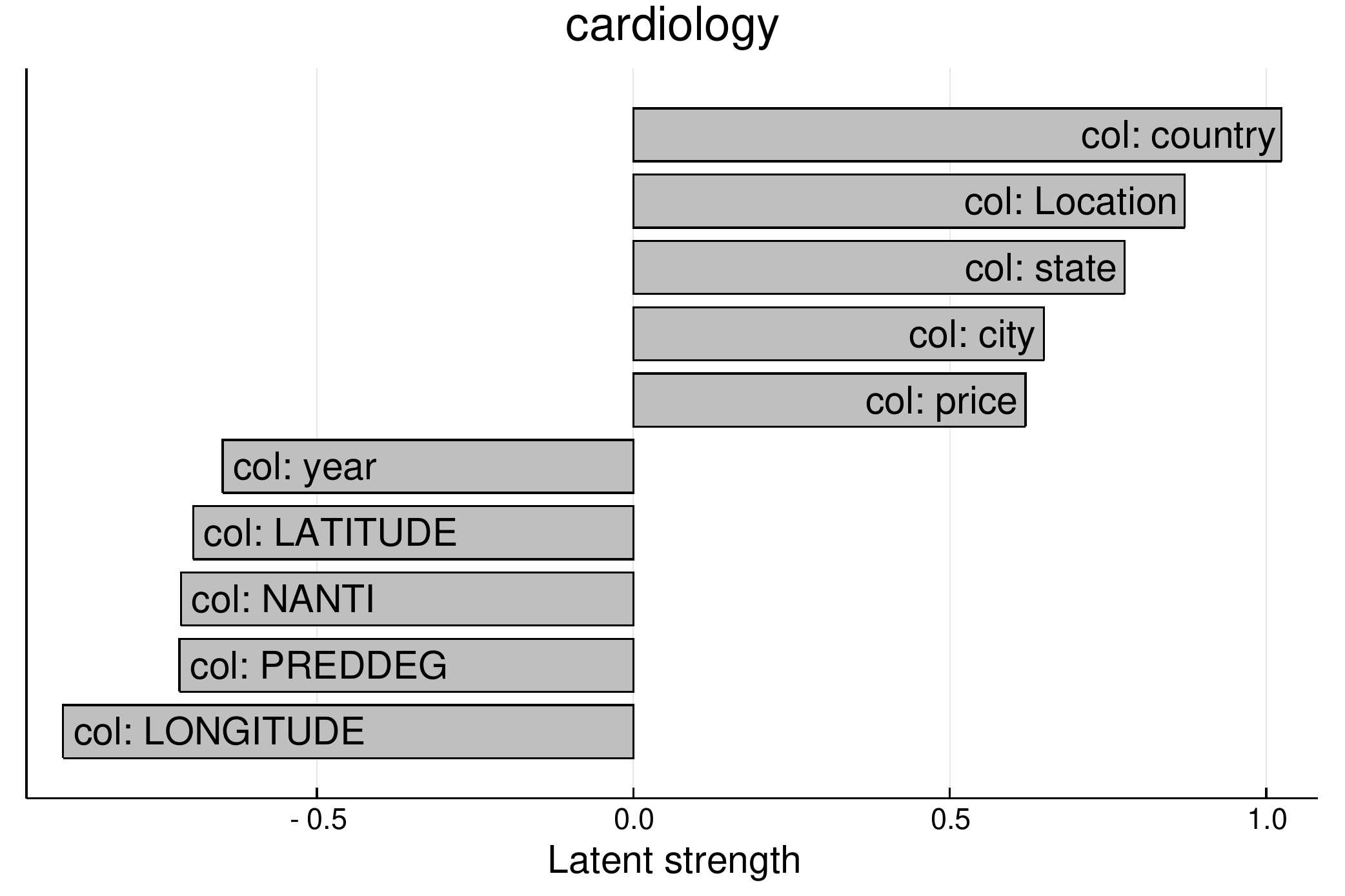}%
\includegraphics[keepaspectratio=true,width=0.45\linewidth,height=\textheight]{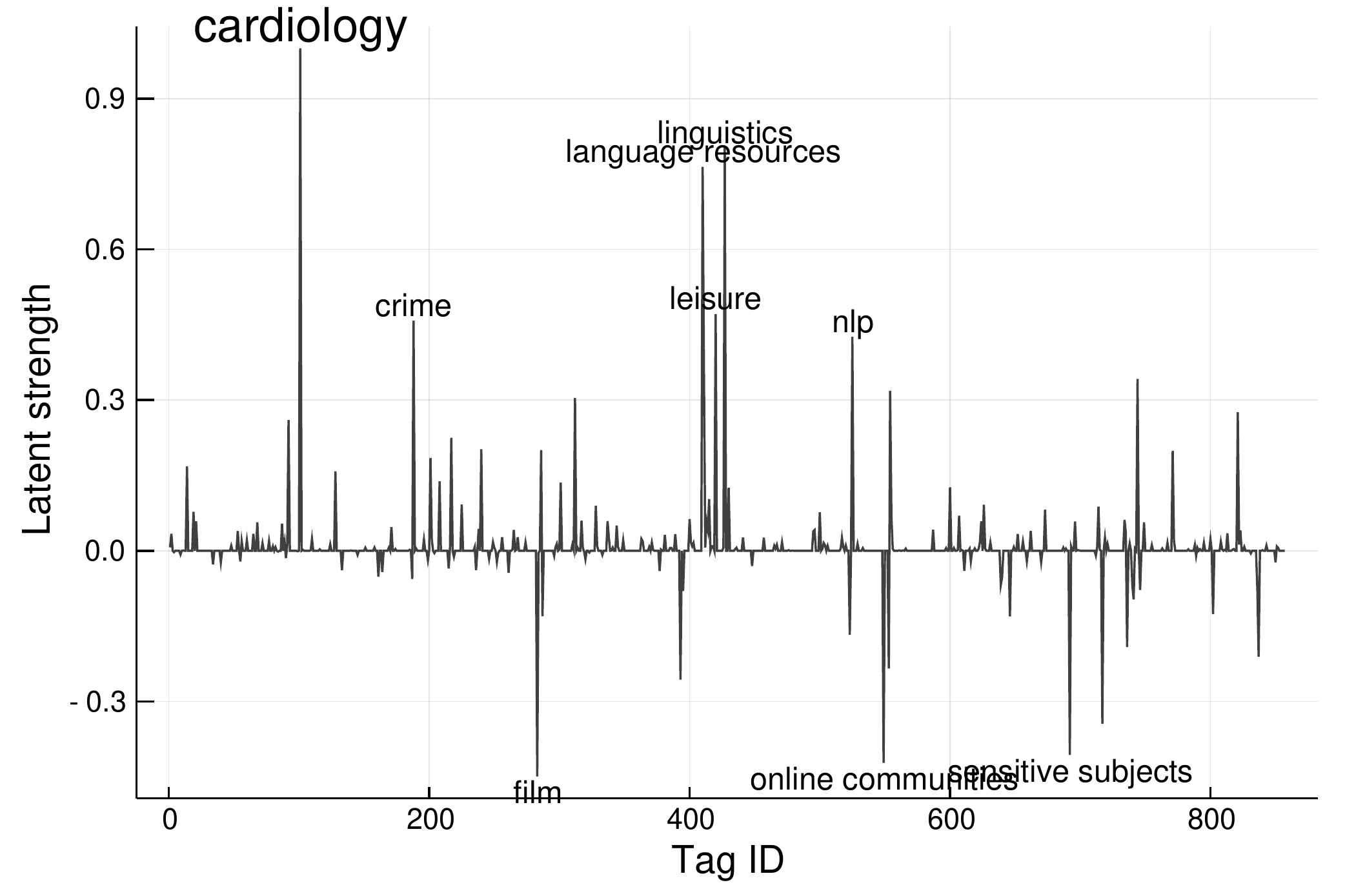}
\includegraphics[keepaspectratio=true,width=0.45\linewidth,height=\textheight]{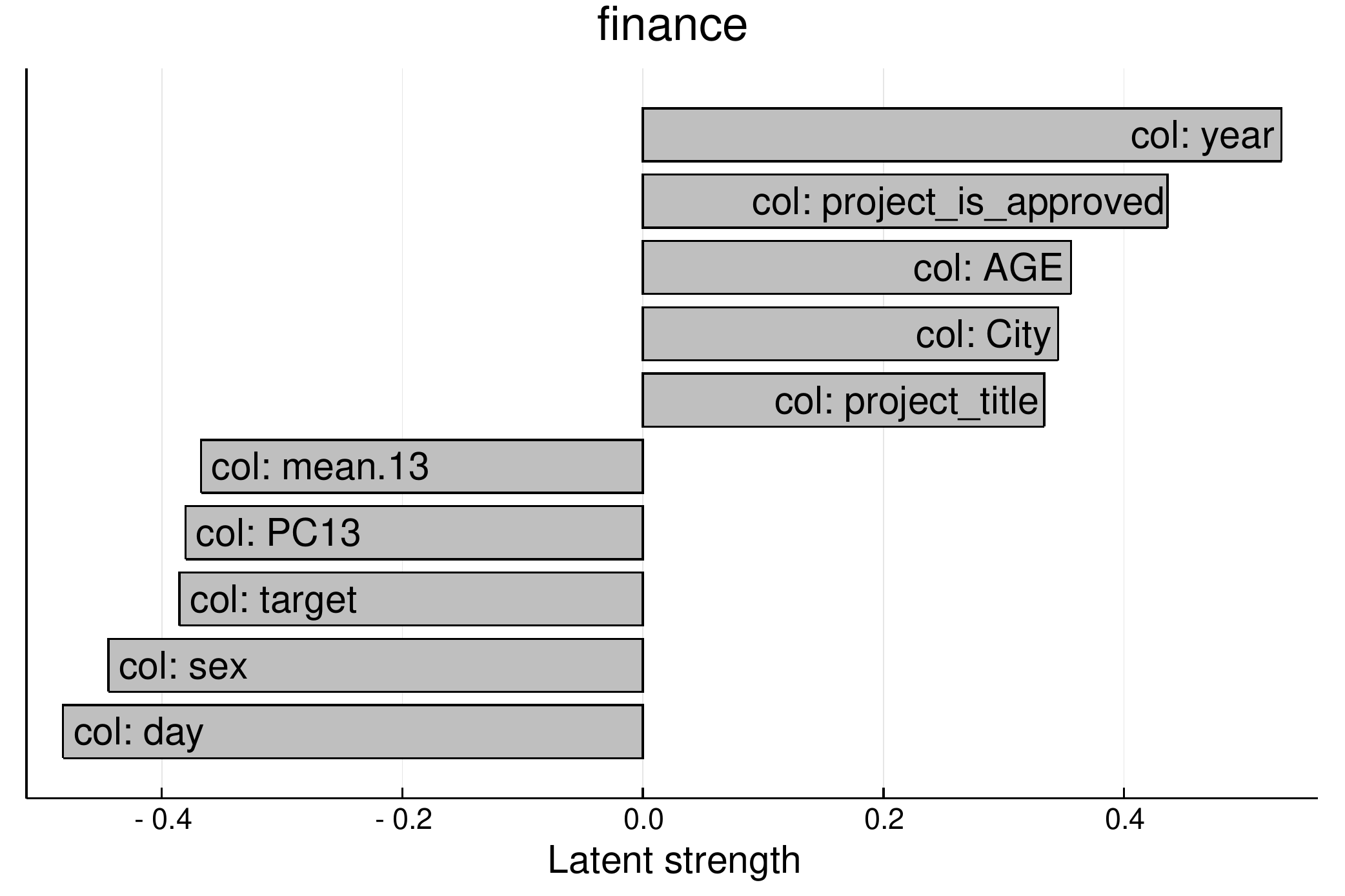}%
\includegraphics[keepaspectratio=true,width=0.45\linewidth,height=\textheight]{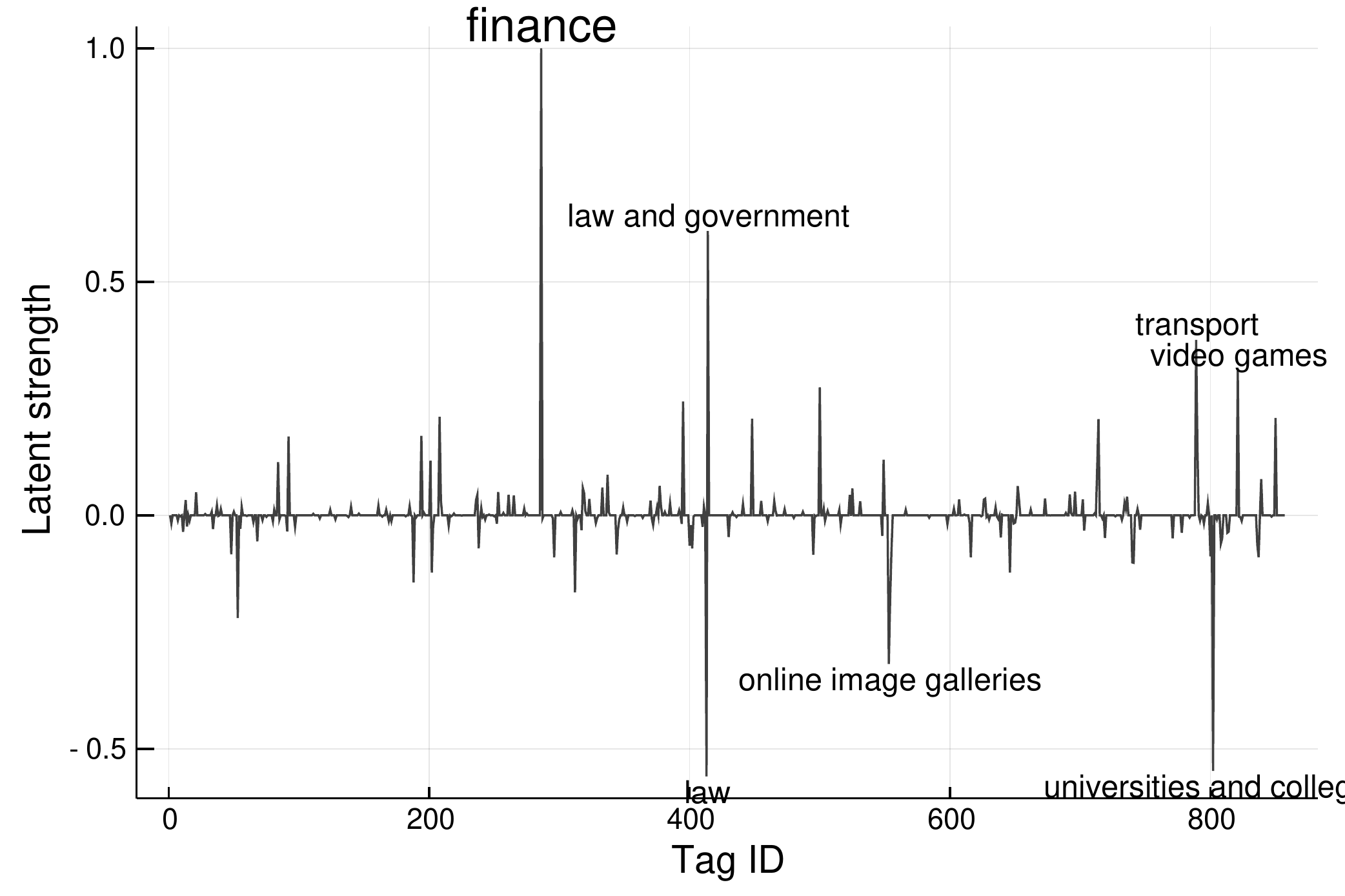}
\caption{Three representative topics from the GLRM topic model in the Kaggle data set: astronomy (top), cardiology (middle) and finance (bottom).
Left: top 5 features and bottom 5 features that are most strongly correlated with the topic.
Right: Tags that co-occur with the anchor tag for that topic.
}
\label{fig:topic-banking}
\end{figure}

We now describe two additional aspects of our study on the Kaggle data set,
the details of which we omit for brevity.

\paragraph{Sparsity in the topic model}
Perhaps unsurprisingly, the topic model trained with no sparsity penalty ($\lambda = 0$) resulted in a completely dense topic model.
Other values of $\lambda = 10^n$, $n = -3, ..., +3$ were also studied.
We observed a large increase in sparsity in going from $\lambda = 0.01$ to $\lambda = 0.1$; the results shown here are presented for
$\lambda = 0.01$. For large values of $\lambda \ge 10$, the results became effectively nonsensical, with no propensity for label
co-occurrence, but also essentially no features associated with the topic either.
Thus, the regularization parameter $\lambda$ should be chosen carefully to strike the right balance between sparsity promotion and
yielding meaningful results, perhaps using a cross-validation or other hyperparameter tuning strategy.

\paragraph{Non-negativity constraints}
We also explored a variant of the GLRM-based topic model with non-negativity constraints,
which are straightforward to incorporate using the appropriate proximal operators \citep{Udell2016}.
Constraining $Y$ to be non-negative yielded results to what we see for both the toy example of \Cref{sec:example2}
and the Kaggle data set, with the primary effect of zeroing out uninformative features (rather than assigning them negative weights of evidence in the topic).
Constraining $X$ to be non-negative was numerically unstable, yielding $Y$ matrices with large negative and large positive entries.
The training loss decreased very slowly in later iterations, suggesting very flat local curvature in the loss surface.
These results suggest that allowing both positive and negative entries in $X$ is critical to yield a well-defined topic model,
whereas nonnegativity in $Y$ may have some benefit in retraining only the features with positive weights of evidence.
This phenomenon may be worth studying in future work.

We conclude this section by discussing limitations and potential extensions of our current study.

\paragraph{Polysemy}
As described in \Cref{sec:gglsi,sec:glrm}, our current study considers only perfectly monosemic labelled topic models (\Cref{def:1topicmodel}).
% In other use cases, polysemic topic models may be necessary to describe how the same data concept may have multiple logical representations,
% like in our motivating use case of merging multiple information systems together.
% If the identity of the polysemic concept is known as well as how $n$-semic it is,
A straightforward approach for handling polysemy would simply be to
identify the topics with the largest individual losses,
introduce additional topics that share the same anchor label,
and retrain the model.
% assuming the identity of the polysemic concept is known.
% The automatic detection of polysemy may be achieved by
% decomposing the loss function \eqref{eq:glrm} into the separate contributions of each topic,
% taking advantage of the elementwise nature of the loss.
% The topics with the largest losses are most likely to be polysemic.

\paragraph{Missing concepts}
% In this study, we assume that the concept labels form a controlled vocabulary---there are no missing concepts.
Missing concepts can be inferred %in a Bayesian manner by
by adding additional columns and rows to $\datatagmatrix$ that are entirely missing,
and seeing if the resulting topic model predicts a meaningful distribution for the new topic.
% that
% can be distinguished from the assumed prior used in imputation.
However, expert judgment is still required to evaluate the hypothesized missing concepts.

\paragraph{Taxonomy learning}
The topic model does not directly yield any taxonomic structure between the concepts.
However, such a taxonomy can be learned using subsumption or hierarchical clustering methods \cite{deKnijff2013}.

\section{Conclusions}

We have introduced a topic modeling approach to learn the mapping between data concepts and
their logical representations in the form of physical metadata such as table names and column names.
Our main contribution is to introduce the GLRM-based topic model in \Cref{alg:glrm},
which makes use of a gauge-transformation approach (as illustrated on LSI in \Cref{sec:gglsi})
to define perfectly monosemic labelled topic models as a natural extension of unsupervised topic modeling.
The GLRM formulation naturally accommodates incorrect and missing labels,
as well as hypothesized sparsity of the conceptual--logical mapping,
while being also straightforward to learn at scale using randomized subsampling techniques.

We have shown using a simple example in \Cref{sec:example,sec:example2} that the results
are readily interpretable.
At the same time, the method is sufficiently scalable to analyze the metadata of Kaggle.com,
with results in \Cref{sec:kaggle} that show evidence for learning semantically meaningful topics
with interpretable features as well as relationships between data concepts.
The exact sparsity parameter $\lambda$ should be chosen carefully to yield sparse, interpretable topics.
While our results ignore the challenges of polysemy, missing concepts, and taxonomy learning,
the further extensions are all theoretically possible and can be studied in future work.

%%
%% The acknowledgments section is defined using the "acks" environment
%% (and NOT an unnumbered section). This ensures the proper
%% identification of the section in the article metadata, and the
%% consistent spelling of the heading.
%\begin{acks}

{\small
\begin{spacing}{0.9}
\paragraph{Disclaimer}
This paper was prepared for informational purposes by the Artificial Intelligence Research group of JPMorgan Chase \& Co and its affiliates (``JP Morgan''), and is not a product of the Research Department of JP Morgan.  JP Morgan makes no representation and warranty whatsoever and disclaims all liability, for the completeness, accuracy or reliability of the information contained herein.  This document is not intended as investment research or investment advice, or a recommendation, offer or solicitation for the purchase or sale of any security, financial instrument, financial product or service, or to be used in any way for evaluating the merits of participating in any transaction, and shall not constitute a solicitation under any jurisdiction or to any person, if such solicitation under such jurisdiction or to such person would be unlawful.
%\end{acks}
\end{spacing}
}
%%
%% The next two lines define the bibliography style to be used, and
%% the bibliography file.
\bibliographystyle{ACM-Reference-Format}
\bibliography{bib}

\end{document}